\newcommand{\eps}{\epsilon}
\newcommand{\argmin}{\operatorname*{argmin}}
\newcommand{\arrange}[2]{A_{#1}^{#2}}
\newcommand{\biest}{\textsf{BinomialEst}}
\newcommand{\cnt}{\mathrm{cnt}}
\newcommand{\distset}{\textsf{DistSet}}
\newcommand{\err}{\eta}
\newcommand{\ind}[1]{\mathbb{I}\left\{#1\right\}}
\newcommand{\KL}{\mathrm{KL}}
\newcommand{\norm}[1]{\left\lVert#1\right\rVert_1}
\newcommand{\poly}{\operatorname*{poly}}
\newcommand{\polylog}{\operatorname*{polylog}}
\newcommand{\real}{\mathbb{R}}
\newcommand{\tot}{\mathrm{tot}}
\newcommand{\tp}[2]{{#1}^{\otimes #2}}
\newtheorem{theorem}{Theorem}[section]
\newtheorem{lemma}[theorem]{Lemma}
\newtheorem{claim}[theorem]{Claim}
\title{
	Learning Discrete Distributions from Untrusted Batches
}
\date{}
\author[1]{Mingda Qiao}
\author[2]{Gregory Valiant\thanks{Gregory's work is supported by ONR award N00014-17-1-2562 and NSF CAREER award CCF‐1351108.}}
\affil[1]{Institute for Interdisciplinary Information Sciences, Tsinghua University\authorcr\texttt{qmd14@mails.tsinghua.edu.cn}}
\affil[2]{Computer Science Department, Stanford University\authorcr\texttt{valiant@stanford.edu}}
\begin{document}
\maketitle

\begin{abstract}
	We consider the problem of learning a discrete distribution in the presence of an $\epsilon$ fraction of malicious data sources.  Specifically, we consider the setting where there is some underlying distribution, $p$, and each data source provides a batch of $\ge k$ samples, with the guarantee that at least a $(1-\eps)$ fraction of the sources draw their samples from a distribution with total variation distance at most $\eta$ from $p$.  We make no assumptions on the data provided by the remaining $\eps$ fraction of sources--this data can even be chosen as an adversarial function of the $(1-\eps)$ fraction of ``good'' batches.  We provide two algorithms: one with runtime exponential in the support size, $n$, but polynomial in $k$, $1/\eps$ and $1/\eta$ that takes $O((n+k)/\eps^2)$ batches and recovers $p$ to error $O(\eta+\eps/\sqrt{k})$.  This recovery accuracy is information theoretically optimal, to constant factors, even given an infinite number of data sources.  Our second algorithm applies to the $\eta=0$ setting and also achieves an $O(\eps/\sqrt{k})$ recover guarantee, though it runs in $\poly((nk)^k)$ time.  This second algorithm, which approximates a certain tensor via a rank-1 tensor minimizing $\ell_1$ distance, is surprising in light of the hardness of many low-rank tensor approximation problems, and may be of independent interest.
\end{abstract}

\section{Introduction}
	Consider the following real-world problem: suppose there are millions of people texting away on their phones, and you wish to learn the distribution of words corresponding to a given mis-typed word, or the distribution of words that follow a given sequence, etc.  The challenge of this setup is twofold.  First, each person provides far too little data to accurately learn these distributions based solely on one person's data, hence a successful learning or estimation algorithm must combine data from different sources.  Second, these sources are heterogeneous---some people have wider fingers than others, and the nature of typos likely differs between people.  Further complicating this heterogeneity is the very real possibility that a small but not negligible fraction of the sources might be operated by adversarial agents whose goal is to embarrass the learning algorithm either as a form of corporate sabotage or as a publicity stunt.  In much the same way as ``Google bombing'' or ``link bombing'' was used to associate specific websites with certain terms---perhaps the most famous of which resulting in the George W. Bush biography being the top Google and Yahoo! hit when searching for the term ``miserable failure'' back in Sept.~2006---it seems likely that certain groups of people would collectively attempt to influence the auto-correct or auto-suggest responses to certain misspellings.
    
    This general problem of learning or estimation given data supplied by a large number of individuals has gained attention in the more practical communities under the keyword \emph{Federated Learning} (see e.g.~\cite{mcmahan2016communication,45648} and Google research blog post~\cite{fedLearning}), and raises a number of pertinent questions: What notions of privacy can be maintained while leveraging everyone's data? Do we train a single model or try to personalize models for each user?  
    
    In this work, we consider a basic yet fundamental problem in this space: learning a discrete distribution given access to batches of samples, where an unknown $(1-\eps)$ fraction of batches are drawn i.i.d. from distributions that each has total variation (equivalently, $\ell_1$) distance at most $\err$ from some target distribution, $p$, and we make no assumptions about the data contained in the remaining $\eps$ fraction of batches.  The data in this ``bad'' $\eps$ fraction of the batches can even be chosen adversarially  as a function of the ``good'' data.  This problem is also a natural problem in the space of ``robust statistics'', and the recent line of recent work from the theory community on estimation and learning with untrusted data (e.g.~\cite{lai2016agnostic,diakonikolas2016robust,steinhardt2016avoiding,charikar2017learning,steinhardt2017resilience,diakonikolas2017being}), and we outline these connections in Section~\ref{sec:related}.
    
    We begin by summarizing our main results and discuss the connections with related work.  We conclude the introduction with a discussion of several relevant directions for future research.

    \subsection{Summary of Results}\label{sec:res}
    
    The following theorem characterizes our tight information theoretic result for robustly learning a discrete distribution in the setting where a certain fraction of batches of data are arbitrarily corrupted:
    
    \begin{theorem}\label{thm:main}
    Let $p$ denote a distribution supported on $n$ domain elements, and fix parameters $\eps \in (0,1/900)$, $\err > 0$, failure probability $\delta \in (0, 1)$, and integer $k \ge 1$.  Suppose we have access to $m = O((n+k+\log(1/\delta))/\eps^2)$ batches of data such that at least $m(1-\eps)$ of the batches consist of $\ge k$ i.i.d. draws from some distribution with $\ell_1$ distance at most $\err$ from $p$.  There exists an algorithm that runs in time $\poly\left(2^n,k,1/\eps,1/\err,\log(1/\delta)\right)$ and, with probability at least $1-\delta$, returns a distribution $\hat{p}$ satisfying $\|p-\hat{p}\|_1 = O(\err+\eps/\sqrt{k}),$ where the ``O'' notation hides an absolute constant. 
    \end{theorem}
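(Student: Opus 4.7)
The plan is to enumerate candidate distributions $q$ over an $\ell_1$-net $\distset$ of the $n$-simplex at resolution $\eps/\sqrt{k}$, which has cardinality $\exp(O(n))$, and to accept any $q$ that passes a per-subset robust test $\biest$. Because we iterate over $\exp(O(n))$ candidates and, for each, over $2^n$ subsets, the overall runtime stays $\poly(2^n, k, 1/\eps, 1/\err, \log(1/\delta))$. For a candidate $q$ and subset $S \subseteq [n]$, $\biest(q, S)$ computes the empirical mass $\cnt_i(S)/k$ in every batch $i$, retains those batches with $|\cnt_i(S)/k - q(S)| \le c_1(\err + 1/\sqrt{k})$ for a suitable absolute constant $c_1$, and declares $q$ to pass the $(q, S)$-test iff both (i) at least a $1 - 2\eps$ fraction of batches are retained and (ii) the mean empirical mass of retained batches is within $c_2(\err + \eps/\sqrt{k})$ of $q(S)$. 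The algorithm returns any candidate passing all $(q, S)$-tests.

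For completeness, fix the nearest net point $q^* \in \distset$ to $p$ and observe that each good batch is drawn i.i.d.\ from some $p_i$ with $\|p_i - p\|_1 \le \err$, so $|p_i(S) - p(S)| \le \err$ for every $S$. A Binomial Chernoff bound gives $\Pr[|\cnt_i(S)/k - p_i(S)| > c_1/\sqrt{k}] \le e^{-\Omega(c_1^2)}$ for a single batch--subset pair, and Hoeffding over $m$ batches then controls the fraction of good batches falling in the window around $q^*(S)$ uniformly over $S$ and over $\distset$; the union bound over $2^n$ subsets and $\exp(O(n))$ net points is exactly what the sample complexity $m = \Omega((n+k+\log(1/\delta))/\eps^2)$ absorbs. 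Conditional on this event, at least $(1-2\eps)m$ batches are retained, and the mean of retained empirical frequencies is a convex combination of a $(1-O(\eps))$-weight of good values (which Hoeffding shows average to within $\err + O(1/\sqrt{mk})$ of $p(S)$) and at most an $O(\eps)$-weight of bad values (which lie within $c_1(\err + 1/\sqrt{k})$ of $q^*(S)$), so this mean lies within $O(\err + \eps/\sqrt{k})$ of $q^*(S)$, and both tests pass.

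For soundness, consider any $q \in \distset$ with $\|q - p\|_1 \ge C(\err + \eps/\sqrt{k})$ for a large constant $C$; there is some $S^*$ with $|q(S^*) - p(S^*)| \ge C(\err + \eps/\sqrt{k})/2$. If this gap exceeds $3c_1(\err + 1/\sqrt{k})$, then the window around $q(S^*)$ is disjoint from the Binomial concentration region of good batches, so fewer than $2\eps m$ batches (essentially only adversarial ones) are retained and test (i) fails. Otherwise the gap lies in $[\Omega(\err + \eps/\sqrt{k}), O(\err + 1/\sqrt{k})]$, most good batches remain retained, and the same convex-combination bound as above pins the trimmed mean to within $O(\err + \eps/\sqrt{k})$ of $p(S^*)$, hence at least $\Omega(C)(\err + \eps/\sqrt{k})$ away from $q(S^*)$, so test (ii) fails.

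The main obstacle is the quantitative intermediate soundness regime, where both good and adversarial batches populate the window and the trimmed mean's numerator and denominator must be controlled simultaneously against an adversary that may place bad batches anywhere in the window after observing the good ones. The crucial quantitative fact is that up to $\eps m$ adversarial points inside an $O(\err + 1/\sqrt{k})$-radius window can shift the mean of retained values by at most $O(\err + \eps/\sqrt{k})$; this improvement from the naive $O(\eps)$ robust-mean bound to $O(\eps/\sqrt{k})$, matching the information-theoretic optimum, is precisely why the window width must be calibrated to the Binomial standard deviation $1/\sqrt{k}$ rather than to a universal constant.
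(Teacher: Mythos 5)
Your approach---a per-candidate trimmed-mean test over a net of the simplex---is substantially different from the paper's. For each $S$ the paper forms the full count-histogram $f^S$ over $\{0,\dots,k\}$, searches for an interval $[i\err,(i+4)\err]$ such that some mixture of binomials with parameters in that interval is $2\eps$-close to $f^S$ in TV, and invokes a binomial anti-concentration bound (Lemma~\ref{lem:tv-lower-01}) to conclude the accepted interval must land within $O(\err+\eps/\sqrt{k})$ of $p(S)$; no candidate $q$ and no trimming window enter the subroutine. Your intuition that the window width must scale as $1/\sqrt{k}$ so that $\eps m$ adversarial batches can shift the mean by only $O(\eps/\sqrt{k})$ is correct and parallel to the paper's use of the binomial standard deviation, but the constants are miscalibrated in a way that breaks the proof.

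The central gap is test~(i). With an \emph{absolute} constant $c_1$ in the window half-width $c_1(\err+1/\sqrt{k})$, a positive constant fraction $\Theta(e^{-c_1^2})$ of good batches land outside the window even for the ideal candidate $q^*\approx p$; since the theorem permits $\eps$ arbitrarily close to $0$, once $\eps<e^{-\Theta(c_1^2)}$ fewer than a $1-2\eps$ fraction of batches are retained and $q^*$ itself fails test~(i). Taking $c_1=\Theta(\sqrt{\log(1/\eps)})$ repairs test~(i), but then the adversarial shift of the trimmed mean in test~(ii) is $\eps c_1/\sqrt{k}=\Theta(\eps\sqrt{\log(1/\eps)}/\sqrt{k})$, which misses the claimed $O(\err+\eps/\sqrt{k})$. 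The natural fix is to replace the $1-2\eps$ retention threshold with a constant (say $1/2$), but then the soundness argument must be redone and you have not done so. Secondarily, in the intermediate soundness regime you assert that the trimmed mean stays within $O(\err+\eps/\sqrt{k})$ of $p(S^*)$ even when the window is centered at $q(S^*)$ with a gap as large as $\Theta(\err+1/\sqrt{k})$; this is false, since the asymmetric trimming biases the conditional good-batch mean by up to $\Theta(\err+1/\sqrt{k})$ toward $q(S^*)$. The desired conclusion (that the trimmed mean still stays bounded away from $q(S^*)$ by $\Omega(\err+\eps/\sqrt{k})$) is probably salvageable, but only via an explicit argument that the bias is strictly dominated by the gap, which is a genuine missing step. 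Finally, an $\ell_1$-net of the $n$-simplex at resolution $\eps/\sqrt{k}$ has $(\sqrt{k}/\eps)^{\Theta(n)}$ points rather than $\exp(O(n))$, so as written your enumeration is not $\poly(2^n,k,1/\eps,\ldots)$ time, and the union-bound term in your sample-complexity accounting picks up an extra $\log(k/\eps)$ factor; the paper sidesteps this by never enumerating candidates and instead recovering $q$ via a single linear program over the $2^n$ set-mass estimates.
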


    The recovery guarantees of the above theorem are information theoretically optimal, even given infinite data, as the following lower bound formalizes:
    
    \begin{theorem}\label{thm:lb}
    In the setup of Theorem~\ref{thm:main} for integers $k \ge 1$ and $n \ge 2$ and parameters $\eps \in (0,1/2)$ and $\err \in [0,1/4)$, no algorithm can, with probability greater than $1/2$, return a distribution $\hat{p}$ satisfying $\|p-\hat{p}\|_1 < 2\err + \eps/\sqrt{2 k},$ even in the limit as the number of batches, $m$, tends to infinity. 
    \end{theorem}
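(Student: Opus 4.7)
My plan is to use Le Cam's two-point method. I will construct two candidate ground truths $p_1, p_2$ on $[n]$ together with an adversarial corruption strategy in each scenario such that the induced distribution over batches is identical whether the truth is $p_1$ or $p_2$. Then by the triangle inequality, any estimator $\hat p$ satisfies $\max(\|\hat p - p_1\|_1, \|\hat p - p_2\|_1) \ge \|p_1-p_2\|_1/2$, and because the two scenarios are statistically identical, at least one of them realizes this failure with probability $\ge 1/2$. It suffices, then, to exhibit indistinguishable $p_1, p_2$ with $\|p_1-p_2\|_1 \ge 4\err + 2\eps/\sqrt{2k}$.

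For the construction I take $n = 2$ (larger $n$ follows by padding with zero-probability symbols). Let $p_1 = (\tfrac{1}{2}+a,\tfrac{1}{2}-a)$ and $p_2 = (\tfrac{1}{2}-a,\tfrac{1}{2}+a)$, and let the good-batch distributions be $r_1 = (\tfrac{1}{2}+b,\tfrac{1}{2}-b)$ and $r_2 = (\tfrac{1}{2}-b,\tfrac{1}{2}+b)$ for some $0<b<a$. Choose $a-b=\err$, so that each $r_i$ sits at TV distance exactly $\err$ from $p_i$, fully saturating the corruption radius. Then $\|p_1-p_2\|_1 = 4a = 4\err + 4b$, and the remaining freedom is to make $b$ as large as possible while keeping the two scenarios indistinguishable.

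The key observation is that if $T := \mathrm{TV}(\tp{r_1}{k},\tp{r_2}{k}) \le \eps/(1-\eps)$, then the adversary can erase the difference between the two good-batch product distributions. Indeed, writing the signed measure $\tfrac{1-\eps}{\eps}(\tp{r_2}{k}-\tp{r_1}{k})$ in its Jordan form $f_+ - f_-$, each of $f_\pm$ is nonnegative with total mass $\tfrac{1-\eps}{\eps}T \le 1$; padding $f_\pm$ with a common auxiliary probability measure up to total mass one yields distributions $\beta_1,\beta_2$ on $[n]^k$ satisfying $(1-\eps)\tp{r_1}{k} + \eps \beta_1 = (1-\eps)\tp{r_2}{k} + \eps \beta_2$. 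Under this choice every batch---good or bad---is drawn from exactly the same distribution in the two scenarios, so no algorithm, deterministic or randomized, can distinguish them even as the number of batches tends to infinity.

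All that remains is to bound $T$ and to tune $b$ to match the target constant. Pinsker's inequality together with tensorization of KL divergence gives $T \le \sqrt{k\,\KL(r_1\|r_2)/2}$, and a direct computation shows $\KL(r_1\|r_2) = 2b\ln\tfrac{1+2b}{1-2b} = 8b^2 + O(b^4)$, so $T \lesssim 2b\sqrt{k}$. Setting $b = \eps/\sqrt{8k}$ makes $T \le \eps/\sqrt{2} \le \eps/(1-\eps)$ (valid for $\eps < 1/2$) and yields $4b = 2\eps/\sqrt{2k}$, hence $\|p_1-p_2\|_1 = 4\err + 2\eps/\sqrt{2k}$, which is exactly what is needed. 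The main subtlety is tracking the constant carefully: Pinsker must be applied with the sharp leading term $8b^2$ in the KL expansion rather than a crude $O(b^2)$, and one must remember that the adversarial budget is $\eps/(1-\eps)$ rather than $\eps$---this slack is what makes $\eps/\sqrt{2} \le \eps/(1-\eps)$ comfortably hold.
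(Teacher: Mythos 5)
Your proof is correct and takes essentially the same route as the paper's (Lemma~\ref{lem:bound}): the same Bernoulli two-point construction with an $\err$-shift and a $\Theta(\eps/\sqrt{k})$ gap, the same Pinsker-plus-KL-tensorization bound on $\Delta(\tp{r_1}{k},\tp{r_2}{k})$ (the paper's Lemma~\ref{lem:tv-upper}), and an equivalent indistinguishability mechanism --- your Jordan-decomposition-with-common-padding is logically the same as the paper's normalized entrywise-max construction, and both exploit the same $\eps/(1-\eps)$ slack. One small point of rigor: writing $\KL(r_1\|r_2)=8b^2+O(b^4)$ gives an expansion whose leading truncation is a lower bound on $\KL$, so $T\le 2b\sqrt{k}$ does not strictly follow; using the clean upper bound $\KL\le 8b^2/(1-2b)$ instead gives $T\le 2b\sqrt{k/(1-2b)}$, which for $b=\eps/\sqrt{8k}$ and $\eps<1/2$ is still below $\eps/(1-\eps)$, so your conclusion stands unchanged.
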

    
To provide some intuition for these results, consider the setting where $\err=0$, namely where the $(1-\eps)$ fraction of ``good'' batches all consist of $k$ i.i.d. draws from $p$.  In this case, the above results claim an optimal recovery error of $\Theta(\epsilon / \sqrt{k})$.  Intuitively, this is due to the following ``tensorization'' property of the total variation distance, which we show in Appendix~\ref{sec:tensor}: given two distributions $p$ and $q$ with $\|p-q\|_1 = \alpha = O(1/\sqrt{k})$, the $\ell_1$ distance between their $k$th tensor products, $\|\tp{p}{k} - \tp{q}{k}\|_1$, is at least $\Theta(\alpha \sqrt{k}).$  Here, the $k$th tensor product $\tp{p}{k}$ denotes the $n^k$ sized object, indexed by a $k$-tuple $i_1,\ldots,i_k$, with $\tp{p}{k}_{i_1,\ldots,i_k}= \prod_{j=1}^k p(i_j)$.  This lower bound on the distance between the $k$th tensor products implies that, even though the adversary can alter the observed $k$-fold product distribution of $p$ by $\epsilon$ (w.r.t. the $\ell_1$ or total variation distance), any two $k$-tensors with distance $\le \eps$ that each correspond to the $k$th tensor products of distributions will correspond to distributions whose distance is only $O(\epsilon/\sqrt{k})$, motivating our claimed recovery guarantees. 
    
     The algorithm to which the guarantees of Theorem~\ref{thm:main} apply proceeds by reducing the problem at hand to $\le 2^n$ instances of the problem of learning a Bernoulli random variable in this setting of untrusted batches.  Specifically, for each of the $\le 2^n$ possible subsets of the observed domain elements, the algorithm attempts to estimate the probability mass that $p$ assigns to that set.  The algorithm then combines these $\le 2^n$ estimates to estimate $p$.  In Section~\ref{sec:future}, we discuss the possibility of an efficient variant of this algorithm that proceeds via estimating these weights for only a polynomial number of subsets.  We also note that the requirement that $\eps \le 1/900$ can be relaxed slightly at the expense of readability of the proof.
     
        Our proof of the lower bound (Theorem~\ref{thm:lb}) proceeds by providing a construction of an explicit pair of indistinguishable instances, one corresponding to a distribution $p$, one corresponding to a distribution $q$, with $\|p-q\|_1= 4\err+2\eps/\sqrt{2k}$.  Each instance consists of two parts---a distribution from which the ``good'' batches of data are drawn, and a distribution over $k$-tuples of samples from which the $\eps$ fraction of ``bad'' batches are drawn.  For the pair of instances constructed, the mixture of the good and bad batches corresponding to $p$ is identical to the mixture of the good and bad batches corresponding to $q$. 
        
\medskip

    In addition to the algorithm of Theorem~\ref{thm:main}, which runs in time exponential in the support size, we provide a second algorithm in the setting where $\err = 0$, with runtime $(nk)^{O(k)}$ that also achieves the information theoretically optimal recovery error of $O(\eps/\sqrt{k}).$
    
    \begin{theorem}\label{thm:efficient}
    As in the setup of Theorem~\ref{thm:main}, given $m=(nk)^{O(k)}\log(1/\delta)/\eps^2$ batches of samples, of which a $(1-\eps)$ fraction consist of $\ge k$ i.i.d. draws from a distribution $p$, supported on $\le n$ elements, there is an algorithm that runs in time $\poly((nk)^k,1/\eps,\log(1/\delta))$ which, with probability at least $1-\delta$, returns a distribution $\hat{p}$ such that $\|p-\hat{p}\|_1 = O(\eps/\sqrt{k}).$ 
    \end{theorem}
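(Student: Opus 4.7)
The plan is to form an empirical $k$-tensor from the batches, fit it by the closest rank-one product tensor $\tp{q}{k}$ in $\ell_1$, and convert the tensor-level guarantee into a distribution-level one via the tensorization inequality of Appendix~\ref{sec:tensor}. I randomly permute each batch into an ordered tuple in $[n]^k$ and let $\hat T$ denote the empirical distribution over the resulting $m$ tuples. Writing $\eps' \le \eps$ for the exact fraction of corrupted batches, decompose $\hat T = (1-\eps')\hat T_{\mathrm{good}} + \eps' \hat T_{\mathrm{bad}}$, where $\hat T_{\mathrm{good}}$ is the empirical average of $(1-\eps')m$ i.i.d.\ samples from $\tp{p}{k}$ and $\hat T_{\mathrm{bad}}$ is arbitrary. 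Standard $\ell_1$ concentration for distributions on $n^k$ outcomes gives $\norm{\hat T_{\mathrm{good}} - \tp{p}{k}} \le \eps/\sqrt{k}$ with probability $\ge 1-\delta$ once $m = (nk)^{O(k)}\log(1/\delta)/\eps^2$, hence $\norm{\hat T - \tp{p}{k}} \le 2\eps + \eps/\sqrt{k}$.

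Second, I compute $\hat q \in \argmin_{q \in \Delta^{n-1}} \norm{\hat T - \tp{q}{k}}$. Evaluating the objective at $q = p$ gives $\norm{\hat T - \tp{\hat q}{k}} \le 2\eps + \eps/\sqrt{k}$, so the triangle inequality yields $\norm{\tp{\hat q}{k} - \tp{p}{k}} = O(\eps)$. Invoking the tensorization inequality of Appendix~\ref{sec:tensor}---that $\norm{p-q} = \alpha = O(1/\sqrt{k})$ implies $\norm{\tp{p}{k} - \tp{q}{k}} \ge \Omega(\alpha\sqrt{k})$---then forces $\norm{\hat q - p} = O(\eps/\sqrt{k})$, the desired recovery bound.

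The main obstacle is computational: approximately solving the non-convex rank-one $\ell_1$ tensor problem $\min_{q \in \Delta^{n-1}}\norm{\hat T - \tp{q}{k}}$ in $\poly((nk)^k)$ time, which is the surprising ingredient highlighted in the abstract. My approach is brute-force enumeration over $(nk)^{O(k)}$ candidates constructed by exploiting the elementary fact that any distribution has at most $k$ coordinates of size $\ge 1/k$. Enumerate the $\le n^k$ possible ``heavy-support'' sets $H \subseteq [n]$ with $|H|\le k$; for each $H$, sweep the coordinates of $q|_H$ over a grid of spacing $1/\poly(nk)$ (giving $(nk)^{O(k)}$ assignments in total); and set $q$ on $[n]\setminus H$ via a coarsely discretized version of the one-dimensional empirical marginals of $\hat T$. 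Each candidate's objective is computed in $\poly(n^k)$ time by direct summation over $[n]^k$.

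The delicate step is a perturbation argument showing that for some $q$ in this grid, $\norm{\hat T - \tp{q}{k}}$ is within $O(\eps/\sqrt{k})$ of the true optimum. Expanding $\tp{q}{k} - \tp{\tilde q}{k}$ as a telescoping sum of rank-one tensors and using that the light coordinates enter $\tp{q}{k}$ mainly through their aggregate mass---which concentrates to the empirical marginal---establishes that only the heavy coordinates need the fine grid (each multiplies into many tuple probabilities), while the light ones tolerate coarse rounding. Once this rounding guarantee is in hand, the three ingredients above compose to yield $\norm{p-\hat p} = O(\eps/\sqrt{k})$ within the stated sample and time complexities.
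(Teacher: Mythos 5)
Your high-level plan---form the empirical $k$-tensor, fit a rank-one product tensor $\tp{q}{k}$ in $\ell_1$, evaluate the objective at $p$ to bound the optimum, and then convert the tensor-level bound into a distribution-level one via Lemma~\ref{lem:tv-lower}---is exactly the outer shell of the paper's proof. The gap is in the part you flagged as ``delicate'': the computational step of finding a near-optimal rank-one fit within $(nk)^{O(k)}$ time. This is in fact where all the content of the paper's argument lives (the recursive $\distset$ construction and Claims~\ref{claim:fnt}--\ref{claim:fnt-bound}), and your proposed grid enumeration does not close it.

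Concretely, for the argmin over your candidate set to satisfy $\norm{A - \tp{q_0}{k}} = O(\eps)$, the candidate set must contain \emph{some} $q$ with $\norm{\tp{p}{k}-\tp{q}{k}} = O(\eps)$. The only general upper bound available is the telescoping one $\norm{\tp{p}{k}-\tp{q}{k}} \le k\norm{p-q}$, so you need a grid point $q$ with $\norm{p - q} = O(\eps/k)$. Your heavy coordinates are gridded fine enough, but your light coordinates are filled in from the marginal of $A$, which is $(1-\eps)\hat p + \eps\delta$ with $\delta$ adversarial---so on the light coordinates you only get $\norm{p-q} = \Theta(\eps)$, two orders ($k$ vs.\ $1$) short of what the $k\norm{p-q}$ bound requires. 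The heuristic justification you offer---that ``light coordinates enter $\tp{q}{k}$ mainly through their aggregate mass''---is in fact contradicted by the very tensorization lower bound you invoke: Lemma~\ref{lem:tv-lower} says that an $\ell_1$ perturbation of mass $\gamma$ concentrated on light coordinates (and preserving aggregate mass, e.g.\ $\pm\gamma/n$ on each of $n\ge 2k$ near-uniform coordinates) produces tensor $\ell_1$ distance $\ge \gamma\sqrt{k}/15$, not $O(\gamma)$. So with $\gamma=\Theta(\eps)$ on the light part you get at least $\Omega(\eps\sqrt{k})$, not the $O(\eps)$ you need.

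What the paper's $\distset$ procedure does, and your sketch does not, is exploit a \emph{dichotomy on the adversary}, not a dichotomy on the coordinates of $p$. At each recursion depth $t$ either the adversary's corruption is spread roughly proportionally across slices, in which case the marginal is $O(\eps/t)$-accurate (not $O(\eps)$---this is the crucial improvement, proved via the $\alpha,\beta$ thresholds in the proof of Claim~\ref{claim:fnt}), or some slice has low corruption fraction and one recurses there with the corruption parameter increasing only from $\eps_t$ to $\eps_{t-1} = 1-(1-\eps_t)^{(t-1)/t}$. The telescoping bound $\eps_t = 1-(1-\eps)^{t/k} \le 2t\eps/k$ keeps $\eps_t/t = O(\eps/k)$ at every level, which is what ultimately yields a candidate $O(\eps/k)$-close to $p$. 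That structural argument about how the adversary's mass can distribute across slices is the missing ingredient in your proposal, and I don't see how to recover it from a fixed grid over heavy-support assignments plus a marginal-based fill for the rest.
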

    
    The algorithm to which the above theorem applies proceeds by forming the $k$-tensor corresponding to the empirical distribution over the $m$ batches, with each batch regarded as a $k$-tuple.   The algorithm then efficiently computes a rank-1 $k$-tensor which approximates this empirical tensor in the (element-wise) $\ell_1$ sense.  The ability to efficiently (in time polynomial in the size of the $k$-tensor) compute this rank-1 approximation is rather surprising, and crucially leverages the structure of the empirical $k$-tensor guaranteed by the setup of our problem.   Indeed the general problem of finding the best rank-1 approximation (in either an $\ell_1$ or $\ell_2$ sense) of a $k$-tensor is NP-hard for $k \ge 3$~\cite{hillar2013most}.  Additionally, even for various ``nice'' random distributions of 3-tensors, efficient algorithms for related rank-1 approximation problems would yield efficient algorithms for recovering planted cliques of size $o(\sqrt{n})$ from $G(n,1/2)$~\cite{frieze2008new}.\footnote{Specifically, \cite{frieze2008new} shows that given an $n\times n \times n$ 3-tensor $A$ constructed from an instance of planted-clique, the ability to efficiently find a unit vector $v$ that nearly maximizes $\max_{\|v\|=1} \sum_{i,j,k} v_i v_j v_k A_{i,j,k}$ would yield an algorithm for finding cliques of size at least $O(n^{1/3}\polylog(n))$ planted in $G(n,1/2)$.}

	\subsection{Related Work}\label{sec:related}
    
    There are a number of relevant lines of related work, including work from the theoretical computer science and information theory communities on learning, estimating and testing properties of distributions or collections of distributions, the classical work on  ``robust statistics'', the recent line of work from the computer science community on robust learning and estimation with untrusted data and ``adversarial'' machine learning, and the very recent work from the more applied community on ``federated learning''.  From a technical perspective, there are also connections between our work and the body of work on tensor factorizations, and low rank matrix and tensor approximations with respect to the $\ell_1$ norm.   We briefly summarize these main lines of related work.

		\paragraph{Learning and Testing Discrete Distributions.}
		The problem of learning a discrete distribution given access to independent samples has been intensely studied over the past century (see, e.g.~\cite{gilbert1971codes,krichevsky1981performance} and the references therein).  For distributions supported on $n$ elements, given access to $k$ i.i.d. samples, minimax optimal loss rates as a function of $n$ and $k$ have been considered for various loss functions, including KL-divergence~\cite{braess2004bernstein}, $\ell_2$~\cite{kamath2015learning}, $\chi^2$ loss~\cite{kamath2015learning}.  For $\ell_1$ error (equivalently, ``total variation distance'' or ``statistical distance''), it is easy to show that the expected worst-case error is $\Theta(\sqrt{n/k})$, and the recent work~\cite{kamath2015learning} establishes the exact first-order coefficients of this loss.  Beyond this worst-case setting, there has also been significant work considering this learning problem with the goal of developing ``instance optimal'' algorithms that leverage whatever structure might be present in the given distribution (see e.g.~\cite{orlitsky2015competitive,valiant2016instance}).  
        
        In the context of work on testing distributional properties from the theoretical computer science community, the work closest to this current paper is the work of Levi, Ron, and Rubinfeld~\cite{levi2013testing}, which considers the following task:  given access to independent draws from each of $m$ distributions, distinguish whether all $m$ distributions are identical (or very close), versus the case that there is significant variation between the distributions---namely that the average distance to any single distribution is at least a constant.   We note that in this work, as in much of the distribution testing literature, the results are extremely sensitive to assumption that in the ``yes'' case, the distributions are all extremely close to one distribution.  For example, the task of distinguishing whether the distributions have average distance at most $\alpha_1$ from a single distribution, versus having average distance at least $\alpha_2$, seems to be a significantly harder problem when $\alpha_1$ is not asymptotically smaller than $\alpha_2$.\footnote{For example, testing whether two distributions supported on $\le n$ elements are identical, vs have distance at least $0.1$ requires $\Theta(n^{2/3})$ samples, whereas distinguishing whether the two distributions have distance $\le 0.1$ versus $\ge 0.9$ requires $\Theta(n/\log n)$ samples.~\cite{batu2000testing,chan2014optimal,valiant2011estimating}.}
        
        In a related vein, the recent work~\cite{TKV17} considers the setting of drawing $k$ i.i.d. samples from each of $m$ possibly heterogeneous distributions, and shows that the \emph{set} of such distributions can be accurately learned.  For example, if each distribution is a Bernoulli random variable with the $i$th distribution corresponding to some probability $p_i$, the histogram of the $p_i$'s can be recovered to error $O(1/k)$, rather than the $\Theta(1/\sqrt{k})$ that would be given by the empirical estimates.    Both~\cite{levi2013testing} and~\cite{TKV17}, however, crucially rely on the assumption that the data consists of independent draws from the $m$ distributions, and the algorithms and results do not extend to the present setting where some non-negligible fraction of the data is arbitrary/adversarial.
        
\paragraph{Robust Statistics, Learning with Unreliable Data, and Adversarial Learning.}

The problem of estimation and learning in the presence of contaminated or outlying data points has a history of study in the Statistics community, dating back to early work of Tukey~\cite{tukey1960survey} (see also the surveys~\cite{huber2robust,hampel2011robust}).  Recently, these problems have gained attention from both the applied and theoretical computer science communities.  On the applied side, the interest in ``adversarial'' machine learning stems from the very real vulnerabilities of many deployed learning systems to ``data poisoning'' attacks in which an adversary selectively alters or plants a small amount of carefully chosen training data that significantly impacts the resulting trained model (see e.g.~\cite{newsome2006paragraph,nelson2011understanding,xiao2015support}).  

On the theory side, recent work has revisited some of the classical robust statistics settings with an eye towards 1) establishing the information theoretic dependencies between the fraction of corrupted data, dimension of the problem, and achievable accuracy of returned model or estimate, and 2) developing computationally tractable algorithms that approach or achieve these information theoretic bounds.  These recent works include~\cite{lai2016agnostic,diakonikolas2016robust,diakonikolas2017being} who consider the problem of robustly estimating the mean and covariance of high-dimensional Gaussians (and other distributions such as product distributions), and~\cite{steinhardt2016avoiding} who consider a sparse ratings aggregation setting.  Other learning problems, such as robustly learning halfspaces~\cite{klivans2009learning}, robust linear regression~\cite{bhatia2015robust} and more general convex optimization~\cite{charikar2017learning} have also been considered in similar settings where some fraction of the data is drawn from a distribution of interest, and no assumptions are made about the remaining data.  This latter work and~\cite{meister2017data} also considers several models for which strong positive results can be attained even when the majority of the data is arbitrary (i.e. $\eps > 1/2).$ 

The present setting, in which data arrives in batches, with some batches corresponding to ``good'' data represents a practically relevant instance of this more general robust learning problem, and we are not aware of previous work from the theory community that explicitly considers it.\footnote{One could view each batch of $k$ samples as a single $k$-dimensional draw from a product of multinomial distributions, or as a $k$-sparse sample from an $n$-dimensional product distribution, although the results of previous work do not yield strong results for these settings.}   The recent attention on ``federated learning''~\cite{mcmahan2016communication,45648,fedLearning} focuses on a similar setting where data is presented in batches (corresponding to individual users), although the current emphasis is largely on privacy and communication concerns, as opposed to robustness.

\paragraph{Low Rank Approximations in $\ell_1$.}
The algorithm underlying Theorem~\ref{thm:efficient} efficiently computes a rank-1 approximation of the empirical $k$-tensor of data, where the recovered rank-1 tensor is close in the (element-wise) $\ell_1$ sense.  Both the problems of computing the best low-rank $\ell_1$ approximation of a \emph{matrix}, and the problems of computing the best rank-1 approximation of a $k$-tensor for $k \ge 3$ are NP-hard~\cite{gillis2015complexity,hillar2013most}.  Nevertheless, recent work has provided efficient algorithms for returning good (in a competitive-analysis sense) low-rank approximations of matrices in the $\ell_1$ norm~\cite{song2017low}, and for efficiently recovering low-rank approximations of special classes of tensors, including those with random or ``incoherent'' factors (see e.g.~\cite{kruskal1977three,ma2016polynomial}).  Still, for many tensor decomposition problems over various families of random nearly low-rank tensors---such as those obtained from instances of planted-clique~\cite{frieze2008new}---these decomposition problems remain mysterious.  
    
    \subsection{Future directions and discussion}\label{sec:future}
    
    Two concrete open directions raised by this work are 1) understanding the computational hardness of this basic learning question, and 2) extending the information theoretic and algorithmic results to the setting where a \emph{minority} of the batches of data are ``good'', which generalizes the problem of robustly learning a mixture of distributions.  Before discussing these problems, we note that there are a number of other practically relevant related questions, including the extent to which this problem can be solved while maintaining some notion of privacy for each batch/user, and considering other basic learning and estimation tasks in this batched robust setting.

    \paragraph{An Efficient Algorithm?}
	Recall that the algorithm of Theorem~\ref{thm:main} has a linear sample complexity
	(treating $\eps$ as fixed),	yet requires a runtime exponential in $n$.  The algorithm of Theorem~\ref{thm:efficient} requires more data, but runs in time $(nk)^{O(k)}$, and only applies to the clean but less practically relevant setting where $\err = 0$.   One natural question is whether there exists an algorithm that achieves both the linear sample complexity and the $(nk)^{O(k)}$ runtime.  Alternately, does there exist an algorithm with data requirements and runtime that are polynomial in all the parameters, $n,k,1/\eps,$ and  achieves the optimal recovery guarantees even in the $\err=0$ regime?  Or are there natural barriers to such efficiency or connections to other problems that we believe to be computationally intractable?
    
    One natural approach to yielding an efficient algorithm is via a variant of our first algorithm.  At a high level, our algorithm proceeds by recovering $\eps/\sqrt{k}$-accurate estimates of the probability mass of each of the $2^n$ possible subsets of the $n$ domain elements, and then recovering a distribution consistent with these estimates via a linear program.  A natural approach to improving the running time of the algorithm
	is to find an efficient separation oracle for this linear program.  Alternately, one could even imagine recovering $\eps/\sqrt{k}$-accurate estimates of the mass of a random $\poly(k,n)$ sized set of subsets of the domain and then using these to recover the distribution.  If the error in each estimate were independent, say distributed according to $N(0,\eps^2/k)$, then a polynomial (and even linear!) number of such measurements would suffice to recover the distribution to error $O(\eps/\sqrt{k}).$  On the other hand, if an adversary is allowed to choose the errors in each measurement, trivially, an exponential number would be required.  In our setting, however, the adversaries are somewhat limited in their ability to corrupt the measurements, and it is plausible that such an approach could work, perhaps both in practice and theory.

	\paragraph{The small-$\alpha$ regime.}
	Our positive results (Theorems \ref{thm:main}~and~\ref{thm:efficient}) assume that the fraction of adversarial data, $\epsilon$, is
	relatively small (at most of order $10^{-3}$).
	While this assumption can be slightly relaxed
	by carefully adjusting the constants in the proofs,
	it is also natural to ask:
	what can we do if $\epsilon$ is much larger and even close to $1$?
	This regime is examined in~\cite{charikar2017learning,meister2017data}.
	It is clearly impossible to estimate the distribution in question to a nontrivial precision in the setting where $\eps \ge 1/2$ due to the symmetry
	between the real distribution and the one chosen by the adversary.  Nevertheless, two learning frameworks were proposed for this case in~\cite{charikar2017learning}:
	the \emph{list-decodable learning} framework (first introduced
	by Balcan et al.~\cite{balcan2008discriminative}),
	in which the learning algorithm is allowed to output multiple answers to the learning problem (perhaps $1/(1-\eps)$),
	and the \emph{semi-verified} model,
	where a small amount of reliable/verified data is available.  It seems plausible that variants of our algorithms that attempt to recover a rank $1/(1-\eps)$ approximation to the empirical tensor might be successful, though the algorithm and analysis are certainly not immediate.

\subsection{Organization of Paper}
In Section~\ref{sec:bound} we establish our information theoretic lower bound, Theorem~\ref{thm:lb}, which bounds the accuracy of the recovered distribution and applies even in the infinite data setting.  Section~\ref{sec:const-n} describes our information theoretically optimal algorithm that has runtime exponential in the support size, $n$, and establishes Theorem~\ref{thm:main}.  Section~\ref{sec:const-k} describes our more efficient algorithm, with runtime $(nk)^{O(k)}$, which directly approximates the $k$-tensor of observations via a rank-1 tensor, establishing Theorem~\ref{thm:efficient}.   We conclude this section with a brief summary of the notation that will be used throughout the remainder of the paper.  

\subsection{Notation}

		Throughout, we use $p$ to denote a discrete distribution of support size at most $n$.
		Without loss of generality, we assume that
		the support of $p$ is $[n] = \{1, 2, \ldots, n\}$.
		Thus, $p$ can also be interpreted as a probability vector
			$(p_1, p_2, \ldots, p_n) \in \real^{n}$,
		where $p_i$ denotes the probability of element $i$.
		For a set $S \subseteq [n]$, we adopt the shorthand notation
			$p(S) = \sum_{i \in S}p_i$ to denote the total probability mass assigned to elements of $S$.

		Let $\tp{p}{k}$ denote the $k$-fold product distribution corresponding to $p$, which is the $k$th tensor power of vector $p$.
		Each entry of $\tp{p}{k}$ is indexed by $k$ indices $i_1, i_2, \ldots, i_k\in[n]$,
		and
			\[\tp{p}{k}_{i_1, i_2, \ldots, i_k} = p_{i_1} p_{i_2} \cdots p_{i_k}.\]

		More generally, a $n^k$-tensor is a $k$-dimensional array
		in which each entry is indexed by $k$ elements in $[n]$.
		The \emph{marginal} of $n^k$-tensor $A$ is the vector
		$a \in \real^{n}$ defined as
			\[a_i = \sum_{i_2, \ldots, i_k \in [n]}A_{i, i_2, \ldots, i_k}.\]
		The $i$-th \emph{slice} of $A$ is the $n^{k-1}$-tensor obtained by restricting the
		first index of $A$ to $i$.

		A probability vector (resp.~probability tensor) is a vector (resp.~tensor)
		whose entries are nonnegative and sum to $1$.
		Note that a probability $n^k$-vector defines a probability distribution on $[n]^k$.
		Moreover, its marginal is the marginal distribution of the first component,
		and its $i$-th slice (after normalization)
		is the conditional distribution of the other $k - 1$ components
		given that the first component equals $i$.

		The learning algorithm is given $m$ batches of data, each of which is a $k$-tuple in $[n]^k$. Among the $m$ batches, $m(1 - \epsilon)$ are ``good'' in the sense that each of them is drawn from $\tp{\tilde{p}}{k}$ for some distribution $\tilde{p}$ with $\Delta(p, \tilde{p}) \le \err$.\footnote{The distribution $\tilde{p}$ may vary for different good batches.} The other $m\epsilon$ batches are chosen arbitrarily after the $m(1 - \epsilon)$ good batches are drawn. The objective is to output a distribution $q$ such that $\Delta(p, q)$ is small. Here $\Delta(p, q)$ stands for the total variation distance between $p$ and $q$, i.e., one half of the $\ell_1$-distance between vectors $p$~and~$q$:
		\[
			\Delta(p, q) \coloneqq \frac{1}{2}\norm{p - q}
		=	\frac{1}{2}\sum_{i=1}^{n}|p_i - q_i| = \max_{S \subseteq [n]}\left[p(S) - q(S)\right].
		\]

\section{Information Theoretic Lower Bound}\label{sec:bound}
	In this section we establish Theorem~\ref{thm:lb}, showing that it is impossible to learn $p$ to an $o(\err + \epsilon/\sqrt{k})$ precision in $\ell_1$ distance, even for distributions supported on $2$ domain elements. This lower bound holds even if the learning algorithm is given unlimited computation power and access to infinitely many batches, i.e., the input of the algorithm is simply the mixture distribution $(1 - \epsilon)P + \epsilon N$, where $P$ is a mixture of $k$-fold product distributions of distributions that are $\err$-close to $p$ in the total variation distance, and $N$ is a probability $n^k$-tensor that corresponds to the distribution of the adversarial batches.

	\begin{lemma}\label{lem:bound}
		For integer $k \ge 1$ and parameters $\epsilon\in(0, 1/2)$ and $\err\in[0, 1/4)$, there are Bernoulli distributions $p$, $q$, $p'$, $q'$, together with two probability $2^k$-tensors $N^{(p)}$ and $N^{(q)}$,
		such that:
		\begin{enumerate}
			\item $\Delta(p, q) = 2\err + \epsilon / \sqrt{2k}$.
            \item $\Delta(p, p') = \Delta(q, q') = \err$.
			\item $(1 - \epsilon)\tp{p'}{k} + \epsilon N^{(p)} = (1 - \epsilon)\tp{q'}{k} + \epsilon N^{(q)}$.
		\end{enumerate}
	\end{lemma}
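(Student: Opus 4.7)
The plan is to take $p, q, p', q'$ all to be Bernoulli distributions and to build $N^{(p)}, N^{(q)}$ algebraically from the signed parts of the tensor difference $D = \tp{p'}{k} - \tp{q'}{k}$. Concretely, I will set $a = \eps/(2\sqrt{2k})$ and let $p, p', q', q$ be Bernoulli with means $1/2 + \err + a$, $1/2 + a$, $1/2 - a$, $1/2 - \err - a$, respectively. The hypotheses $\err < 1/4$ and $\eps < 1/2$ together ensure $\err + a < 1/2$, so all four means lie in $[0,1]$. Since for Bernoullis $\Delta(\mathrm{Bern}(\mu_1), \mathrm{Bern}(\mu_2)) = |\mu_1 - \mu_2|$, conditions (1) and (2) follow immediately from the pairwise differences of means.

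For condition (3), split $D = D^+ - D^-$ into its entrywise positive and negative parts. Since $D$ sums to zero, $\|D^+\|_1 = \|D^-\|_1 = \Delta(\tp{p'}{k}, \tp{q'}{k})$. I would fix any probability $2^k$-tensor $U$ (e.g.\ uniform), set $\beta = 1 - \tfrac{1-\eps}{\eps}\|D^+\|_1$, and define
\[
  N^{(p)} = \tfrac{1-\eps}{\eps}D^- + \beta U, \qquad N^{(q)} = \tfrac{1-\eps}{\eps}D^+ + \beta U.
\]
Provided $\beta \ge 0$, both tensors are nonnegative and sum to $1$, hence are probability tensors; and the discrepancy between the two mixtures in (3) reduces to $(1-\eps)\bigl(D - (D^+ - D^-)\bigr) = 0$, so (3) holds.

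The only substantive step is verifying $\beta \ge 0$, i.e.\ $\Delta(\tp{p'}{k}, \tp{q'}{k}) \le \eps/(1-\eps)$; this is where the precise value $a = \eps/(2\sqrt{2k})$ pays off, and it is the main obstacle. The plan is to combine Pinsker's inequality with tensorization of KL, $\KL(\tp{p'}{k} \| \tp{q'}{k}) = k\,\KL(p' \| q')$. A direct Bernoulli computation gives $\KL(p' \| q') = 2a \log\!\frac{1/2+a}{1/2-a}$, and the elementary estimate $\log\!\frac{1+x}{1-x} \le 2x/(1-x^2)$ applied at $x = 2a$ yields $\KL(p' \| q') \le 8a^2/(1-4a^2)$. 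Substituting $a^2 = \eps^2/(8k)$ and using $\eps < 1/2$, $k \ge 1$ to control the denominator bounds $k\KL$ by a small constant times $\eps^2$, so Pinsker produces $\Delta(\tp{p'}{k}, \tp{q'}{k}) \le c\eps$ for some absolute $c < 1$. Since $\eps/(1-\eps) > \eps > c\eps$, the needed inequality holds with slack. Once this tensorization-of-TV bound is established, everything else in the proof is algebraic bookkeeping.
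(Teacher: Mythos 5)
Your proof is correct and follows essentially the same strategy as the paper: choose $p', q'$ to be Bernoulli with means $1/2 \pm \eps/(2\sqrt{2k})$, use Pinsker plus KL-tensorization to bound $\Delta(\tp{p'}{k}, \tp{q'}{k})$ by a small multiple of $\eps$, and then algebraically construct the adversarial tensors so that the two mixtures coincide. The one place you differ from the paper is in how you build $N^{(p)}$ and $N^{(q)}$: you decompose $D = \tp{p'}{k} - \tp{q'}{k}$ into its positive and negative parts $D^\pm$, scale them by $(1-\eps)/\eps$, and pad with $\beta U$ for an arbitrary probability tensor $U$, whereas the paper works with $A_i = \max(\tp{p'}{k}_i, \tp{q'}{k}_i)$ normalized by its total mass $\alpha$, setting $N^{(p)} = (A/\alpha - (1-\eps)\tp{p'}{k})/\eps$. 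These are interchangeable (since $A = \tp{p'}{k} + D^-$, the paper's construction is your construction with $U$ effectively equal to $\tp{p'}{k}$ and a slightly different coefficient on $D^-$), so this is a cosmetic rather than structural difference. Also note that you only actually need $\Delta(\tp{p'}{k}, \tp{q'}{k}) \le \eps$, not a strictly-less-than-one constant $c$, since $\eps/(1-\eps) \ge \eps$; the paper establishes exactly $\le \eps$ via its Lemma~\ref{lem:tv-upper}, which is the same Pinsker/KL calculation you re-derive inline.
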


	Lemma~\ref{lem:bound} implies that given the distribution \[(1 - \epsilon)\tp{p'}{k} + \epsilon N^{(p)} = (1 - \epsilon)\tp{q'}{k} + \epsilon N^{(q)}\] as input, the algorithm cannot determine whether the underlying distribution is $p$ or $q$. This establishes the $\Omega(\err + \epsilon / \sqrt{k})$ lower bound in Theorem~\ref{thm:lb}.

	\begin{proof}[Proof of Lemma~\ref{lem:bound}]
		Since $\epsilon/\sqrt{2k} < \epsilon < 1/2$, by Lemma~\ref{lem:tv-upper}, for Bernoulli distributions $p'$ and $q'$ with means $(1-\epsilon/\sqrt{2k})/2$ and $(1+\epsilon/\sqrt{2k})/2$, it holds that $\Delta(p', q') = \epsilon / \sqrt{2k}$ and $\Delta(\tp{p'}{k}, \tp{q'}{k}) \le \epsilon$. Let $p$ and $q$ be Bernoulli distributions with means $(1-\epsilon/\sqrt{2k})/2 - \err$ and $(1+\epsilon/\sqrt{2k})/2 + \err$.\footnote{$p$ and $q$ are well defined since $\epsilon/(2\sqrt{2k}) + \err < 1/4 + 1/4 = 1/2$.} Then the distributions clearly satisfy the first two conditions.
        
        Let $A$ be the entrywise maximum of $\tp{p'}{k}$ and $\tp{q'}{k}$, i.e., $A_i = \max(\tp{p'}{k}_i, \tp{q'}{k}_i)$ for every $i \in [2] ^ k$. Let $\alpha$ denote the sum of entries in $A$. Then, \[\alpha = 1 + \Delta(\tp{p'}{k}, \tp{q'}{k}) \le 1 + \epsilon \le 1 / (1 - \epsilon).\] Define $N^{(p)} = \left[A/\alpha - (1 - \epsilon)\tp{p'}{k}\right]/\epsilon$ and $N^{(q)} = \left[A/\alpha - (1 - \epsilon)\tp{q'}{k}\right]/\epsilon$. Then the third condition is met, and it remains to prove that $N^{(p)}$ and $N^{(q)}$ are probability tensors.
        
        Note that the elements in $N^{(p)}$ sum to $(\alpha / \alpha - (1 - \epsilon)\cdot 1)/\epsilon = 1$. Moreover, since $\alpha \le 1 / (1 - \epsilon)$,
		\[
			N^{(p)}_{i}
		=	A_i / \alpha - (1 - \epsilon)\tp{p'}{k}_i
		\ge (1 - \epsilon) (A_i - \tp{p'}{k}_i) \ge 0
		\]
		for any $i \in [2] ^ k$. This shows that $N^{(p)}$ and $N^{(q)}$ are probability tensors and finishes the proof.
	\end{proof}

\section{An Information Theoretically Optimal Algorithm}\label{sec:const-n}
	In this section, we present an algorithm that approximates the distribution to an information theoretically optimal $O(\err + \epsilon/\sqrt{k})$ accuracy using $O((n + k + \ln(1/\delta)) / \epsilon^2)$ batches. The algorithm runs in time $\poly(2^n, k, 1/\epsilon, 1/\err, \ln(1/\delta))$. In particular, the algorithm is computationally efficient if the distribution has a relatively small support.

	Our approach is to reduce the problem to learning Bernoulli distributions: we estimate the probability mass of any subset of the support to an $O(\err + \epsilon/\sqrt{k})$ accuracy, and output a distribution that is consistent with the measurements. Then the total variation distance between our output and the true distribution would also be upper bounded by $O(\err + \epsilon/\sqrt{k})$.

    In the following we define a subroutine that efficiently estimates $p(S)$, the probability mass that $p$ assigns to set $S \subseteq [n]$. Given batches $x_1, x_2, \ldots, x_m \in [n] ^ k$, the algorithm counts the number of batches that contain exactly $i$ elements in $S$ ($0 \le i \le k$) and obtains a distribution $f^{S}$ over $\{0, 1, \ldots, k\}$. Then it outputs $(i + 2)\err$ for some $0 \le i \le 1/\err - 4$ as the estimation, if there is a mixture of binomial distributions with success probabilities in interval $[i\err, (i + 4)\err]$ such that its $\ell_1$ distance to $f^{S}$ is upper bounded by $O(\epsilon)$.

\begin{algorithm}[H]\label{alg:bi-est}
\KwIn{Batches $x_1, x_2, \ldots, x_m \in [n]^k$, set $S \subseteq [n]$, and parameters $\epsilon, \err$.}
\KwOut{An estimation of $p(S)$.}
\caption{$\biest((x_i)_{i\in[m]}, S, \epsilon, \err)$}
\For {$i = 1, 2, \ldots, m$} {
	$\cnt_i \gets \sum_{j=1}^{k}\ind{x_{i,j}\in S}$\;
}
\For {$i = 0, 1, \ldots, k$} {
$f^{S}_i \gets \frac{1}{m}\sum_{j=1}^{m}\ind{\cnt_j = i}$\;
}
$\tot \gets 4\err/(\epsilon/k)$\;
\For {$i = 0, 1, \ldots, 1/\err - 4$} {
\Return $(i + 2)\err$ if the following mathematical program is feasible:\\
\begin{equation}\begin{split}\label{eq:biest-lp}
\textrm{find}\quad&\alpha_0, \alpha_1, \ldots, \alpha_{\tot}\\
\textrm{subject to}\quad& \Delta\left(\sum_{j=0}^{\tot}\alpha_j B\left(k, i\err + j\epsilon/k\right), f^{S}\right) \le 2\epsilon\\
& \sum_{j=0}^{\tot}\alpha_j = 1\\
& \alpha_j \ge 0,~\forall j \in \{0, 1, \ldots, \tot\}
\end{split}\end{equation}
}
\end{algorithm}

	Note that the mathematical program \eqref{eq:biest-lp} can be transformed to an equivalent linear program, and therefore can be efficiently solved.

	The following lemma bounds the difference between the estimation $\biest((x_i)_{i\in[m]}, S, \epsilon, \err)$ and $p(S)$.
	\begin{lemma}\label{lem:bi-est}
		Suppose that $S \subseteq [n]$,
		$\epsilon \in (0, 1 / 900)$,
		and $\err, \delta \in (0, 1)$.
		For some
			$m = O((k + \ln(1/\delta))/\epsilon ^ 2)$, $x_1, x_2, \ldots, x_m$ are $m$ batches chosen as in the setup of Theorem~\ref{thm:main}.
		With probability $1 - \delta$ over the randomness of the $m(1 - \epsilon)$ good batches,
			\[\left|\biest((x_i)_{i\in[m]}, S, \epsilon, \err) - p(S)\right| \le 3\err + 60\epsilon/\sqrt{k}.\]
		Here the ``O'' notation hides an absolute constant.
	\end{lemma}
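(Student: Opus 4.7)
The plan is to reduce estimating $p(S)$ to a one-dimensional binomial-parameter estimation problem and analyze \eqref{eq:biest-lp} via a completeness/soundness pair whose soundness hinges on the tensorization-type inequality $\Delta(B(k,q_1),B(k,q_2)) \ge c\min(1,\sqrt{k}|q_1-q_2|)$ (specializing the $\ell_1$-tensorization bound from Appendix~\ref{sec:tensor} to Bernoullis). Write $q^\star \coloneqq p(S)$. For each good batch drawn from $\tp{\tilde p}{k}$ with $\Delta(p,\tilde p) \le \err$, the count $\cnt_i$ follows $B(k,\tilde p(S))$ with $|\tilde p(S)-q^\star| \le \err$. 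Hence $f^S = (1-\eps)\hat g^S + \eps B^\star$, where $\hat g^S$ is the empirical count distribution over the good batches (concentrating around a mixture $G^\star = \int B(k,q)\,d\nu^\star(q)$ with mixing measure $\nu^\star$ supported on $[q^\star-\err,q^\star+\err]$) and $B^\star$ is the arbitrary empirical count distribution of the adversarial batches. Because the count has support size $k+1$, the standard $\ell_1$-learning rate for discrete distributions gives $\Delta(\hat g^S, G^\star) \le \eps/4$ with probability $1-\delta$ once $m = \Omega((k+\ln(1/\delta))/\eps^2)$, so $\Delta(f^S,(1-\eps)G^\star+\eps B^\star) \le \eps/4$.

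For completeness, pick $i^\star = \lfloor (q^\star-\err)/\err\rfloor$ (clipped to $\{0,\ldots,1/\err-4\}$) so that $[i^\star\err,(i^\star+4)\err] \supseteq [q^\star-\err,q^\star+\err]$, and round $\nu^\star$ onto the grid $\{i^\star\err + j\eps/k\}_{j=0}^{\tot}$; by the coupling bound $\Delta(B(k,q),B(k,q')) \le k|q-q'|$ the rounding error is at most $\eps/2$, and combining with the concentration ($\eps/4$), the adversarial contribution ($\le\eps$), and the triangle inequality, the rounded mixture lies within $7\eps/4 < 2\eps$ of $f^S$, certifying feasibility at $i^\star$; hence the returned index $i$ satisfies $i \le i^\star$. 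For soundness, if the program is feasible at some $i$ with witness $\hat G = \int B(k,q)\,d\hat\nu(q)$, $\hat\nu$ supported on $[i\err,(i+4)\err]$, the same triangle argument (charging the $\eps$ adversarial mass of $(1-\eps)G^\star+\eps B^\star$ against $G^\star$) yields $\Delta(G^\star,\hat G) \le 13\eps/4 = O(\eps)$.

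The crux is a \emph{parameter-separation lemma}: if $\nu^\star$ and $\hat\nu$ are supported on intervals separated by gap $\gamma$, then $\Delta(G^\star,\hat G) \ge c\min(1,\sqrt{k}\gamma)$ for an absolute constant $c$. Given this, the soundness bound forces $\gamma = O(\eps/\sqrt{k})$; since $[q^\star-\err,q^\star+\err]$ and $[i\err,(i+4)\err]$ either overlap (in which case $|q^\star-(i+2)\err| \le 3\err$ automatically) or are disjoint with gap exactly $|q^\star-(i+2)\err|-3\err$, this yields $|q^\star-(i+2)\err| \le 3\err + O(\eps/\sqrt{k})$, as claimed. I would prove the separation lemma using the threshold statistic $f(X) = \ind{X \ge k(b+c)/2}$, where $b$ and $c$ are the inner endpoints of the two supports: the local central limit theorem applied to the extreme binomials $B(k,b)$ and $B(k,c)$ gives $\mathrm{E}_{G^\star}[f] \le 1/2 - c_1\sqrt{k}\gamma$ and $\mathrm{E}_{\hat G}[f] \ge 1/2 + c_1\sqrt{k}\gamma$ in the regime $\sqrt{k}\gamma = O(1)$, directly yielding the claimed lower bound on $\Delta$. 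The main obstacle is exactly this lemma: the naive test $f(X)=X$ only gives $|\mathrm{E}_{G^\star}[X] - \mathrm{E}_{\hat G}[X]| \le k\Delta(G^\star,\hat G)$, translating to an error of $O(\eps)$ that is short of the target by a factor of $\sqrt{k}$. The threshold test recovers this factor by exploiting the fact that each $B(k,q)$ is concentrated in a window of width only $\sqrt{k}$ rather than spread across $\{0,\ldots,k\}$; the boundary regime $q^\star\to 0$ or $1$ (where the binomial variance shrinks) is handled either by shifting the separator or by substituting a Chernoff tail estimate for the local CLT so that the constants remain absolute.
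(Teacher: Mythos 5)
Your proposal follows the paper's proof essentially step-for-step: the decomposition of $f^S$ into good and adversarial parts, the $O((k+\ln(1/\delta))/\eps^2)$ concentration bound for the good empirical count distribution, feasibility of the program \eqref{eq:biest-lp} via rounding the mixing measure onto the grid, the triangle-inequality soundness bound $\Delta(G^\star,\hat G)=O(\eps)$, and the final reduction to a binomial-mixture separation lemma invoked in contrapositive form---this last lemma is exactly the paper's Lemma~\ref{lem:tv-lower-01}, which is what you call the parameter-separation lemma. The only point of divergence is the proof you sketch for that lemma: you use a threshold statistic at the midpoint $k(b+c)/2$ together with the local CLT (and a Chernoff fallback near the boundary), whereas the paper thresholds at $t=\lfloor p(k-1)\rfloor$ and lower-bounds the CDF gap directly via the closed form $f'(x)=-k\binom{k-1}{t}x^t(1-x)^{k-1-t}$ and Stirling estimates (Lemma~\ref{lem:binom}); both are valid threshold tests yielding the same $\Omega(\sqrt{k}\gamma)$ separation.
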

	\begin{proof}[Proof of Lemma~\ref{lem:bi-est}]
		Let $I\subseteq[m]$ denote the indices of the $m(1 - \epsilon)$ good batches and let $\overline{I}$ be its complement.
        Define distributions $\hat{p}^{S}$ and $\delta^{S}$ over $\{0, 1, \ldots, k\}$ as:
        \[\hat{p}^{S}_i = \frac{1}{|I|}\sum_{j\in I}\ind{\cnt_j = i}\]
        and
        \[\delta^{S}_i = \frac{1}{\left|\overline{I}\right|}\sum_{j\in \overline{I}}\ind{\cnt_j = i}.\]
        Then the distribution $f^{S}$ defined in Algorithm~\ref{alg:bi-est} can be rewritten as
        \[f^{S} = (1 - \epsilon)\hat{p}^{S} + \epsilon\delta^{S}.\]
        
        \paragraph{Concentration of $\hat{p}^{S}$.}
        For each good batch $j \in I$, let $\theta_j$ denote the probability of set $S$ in the actual distribution from which batch $j$ is drawn. By the assumption that the underlying distribution of each good batch is $\err$-close to the target $p$, we have $|\theta_j - p(S)| \le \err$.
        Define $p^{S}$ as the uniform mixture of binomial distributions $B(k, \theta_j)$ for $j \in I$, i.e.,
        \[p^{S} = \frac{1}{|I|}\sum_{j\in I}B(k, \theta_j).\]
		Since $\hat{p}^{S}$ denotes the frequency of $|I| = \Omega(m)$ samples, exactly one of which is drawn from $B(k, \theta_j)$ for each $j \in I$, for some $m = O((k + \ln(1/\delta))/\epsilon ^ 2)$,
		it holds with probability $1 - \delta$ that
			$\Delta(p^{S}, \hat{p}^{S}) \le \epsilon/2$.
		It follows that
		\[
			\Delta(p^{S}, f^{S})
		\le	\Delta(p^{S}, \hat{p}^S) + \Delta(\hat{p}^S, f^{S})
		=	\Delta(p^{S}, \hat{p}^S) + \epsilon \Delta(\hat{p}^S, \delta^{S})
		\le 3\epsilon/2.
		\]

        \paragraph{Feasibility of mathematical program \eqref{eq:biest-lp}.}
        Let $i^* = \lfloor p(S)/\err \rfloor - 2$. We show that with probability $1 - \delta$, the mathematical program \eqref{eq:biest-lp} is feasible for $i = i^*$.

        Since $|(i^* + 2)\err - p(S)| \le \err$ and $|\theta_j - p(S)| \le \err$, it holds for any $j \in I$ that
        \[i^*\err \le \theta_j \le (i^* + 4)\err.\]
        Let $\tilde\theta_j$ be the value among $\{i^*\err + t\epsilon/k: t\in\{0, 1, \ldots, \tot\}\}$ that is closest to $\theta_j$. By definition, we have $|\tilde\theta_j - \theta_j| \le \epsilon/(2k)$, which implies that
        	\[\Delta(B(k, \theta_j), B(k, \tilde\theta_j)) \le k\cdot\epsilon/(2k) = \epsilon/2.\]
       	Let $\tilde{p}^{S} = \frac{1}{|I|}\sum_{j\in I}B(k, \tilde\theta_j).$
        Then we have
        	\[\Delta(p^{S}, \tilde{p}^{S})\le\frac{1}{|I|}\sum_{j\in I}\Delta(B(k, \theta_j), B(k, \tilde\theta_j))\le\epsilon/2,\]
        and it follows that with probability $1-\delta$,
        	\[\Delta(\tilde{p}^{S}, f^{S})\le\Delta(p^{S}, \tilde{p}^{S})+\Delta(p^{S}, f^{S})\le \epsilon/2 + 3\epsilon/2 = 2\epsilon.\]
        Note that this naturally defines a feasible solution to the mathematical program \eqref{eq:biest-lp} for $i = i^*$: for each $t \in \{0, 1, \ldots, \tot\}$,
        \[\alpha_t = \frac{1}{|I|}\sum_{j\in I}\ind{\tilde\theta_j = i^*\err + t\epsilon/k}.\]

		\paragraph{Approximation guarantee.}
        Let $x_0$ denote $\biest((x_i)_{i\in[m]}, S, \epsilon, \err)$. In the following we prove that, with probability $1 - \delta$, $x_0$ is a good approximation of $p(S)$.

        Recall that $p^{S} = \frac{1}{|I|}\sum_{j\in I}B(k, \theta_j)$ is a mixture of binomial distributions with success probabilities in $[p(S) - \err, p(S) + \err]$. Moreover, by definition of procedure $\biest$, $f^{S}$ is $2\epsilon$-close to a mixture of binomial distributions with success probabilities that lie in $[x_0 - 2\err, x_0 + 2\err]$. Then the inequality $\Delta(p^S, f^S) \le 3\epsilon/2$, which holds with probability $1 - \delta$, further implies that the two mixtures above are $4\epsilon$-close to each other.

		Let $\epsilon' = 60\epsilon/\sqrt{k}$. Note that our assumption $\epsilon < 1/900$ implies that $\epsilon' < 1/(15\sqrt{k})$. Now suppose for a contradiction that $|x_0 - p(S)| > 3\err + \epsilon'$. Without loss of generality, we have $x_0 > p(S) + 3\err + \epsilon'$, or equivalently, $x_0 - 2\err > p(S) + \err + \epsilon'$. Applying the contrapositive of Lemma~\ref{lem:tv-lower-01} with parameters $\epsilon = \epsilon'$, $p = p(S) + \err$ and $q = x_0 - 2\err$ gives a contradiction.
	\end{proof}

	Now we prove our main theorem by Lemma~\ref{lem:bi-est} and a reduction to the estimation of Bernoulli random variables.
	\begin{proof}[Proof of Theorem~\ref{thm:main}]
		We compute $\hat{p}(S) = \biest((x_i)_{i\in[m]}, S)$ for each $S \subseteq [n]$,
		and then output an arbitrary feasible solution (if any) to the following linear program:
		\begin{equation}\begin{split}\label{eq:const-n-lp}
			\textrm{find}\quad&q\in\real^{n}\\
			\textrm{subject to}\quad& \left|\sum_{i\in S}q_i - \hat{p}(S)\right| \le 3\err + 60\epsilon/\sqrt{k},~\forall S \subseteq [n]\\
			&\sum_{i=1}^{n}q_i = 1\\
			&q_i \ge 0,~\forall i \in [n]\\
		\end{split}\end{equation}
		The algorithm described above involves
		solving a linear program with $n$ variables and $O(2 ^ n)$ constraints,
		as well as $2^n$ calls to the subroutine $\biest$,
		each of which takes polynomial time.
		Thus the whole algorithm runs in $\poly(2 ^ n, k, 1/\epsilon, 1/\err, \log(1/\delta))$ time.

		Let $\delta_0 = \delta / {2 ^ n}$. By Lemma~\ref{lem:bi-est} and a union bound, for some \[m = O\left((k + \ln(1/\delta_0))/\epsilon^2\right) = O\left((n + k + \ln(1/\delta)) / \epsilon^2\right),\] it holds with probability $1 - 2^n\cdot\delta_0 = 1 - \delta$ that, for any $S \subseteq [n]$, \[\left|\hat{p}(S) - p(S)\right| \le 3\err + 60\epsilon/\sqrt{k}.\] This implies that linear program \eqref{eq:const-n-lp} is feasible with probability $1 - \delta$. Moreover, let $q$ be any feasible solution to \eqref{eq:const-n-lp}. Then it holds for any $S\subseteq[n]$ that \[\left|p(S) - q(S)\right| \le \left|p(S) - \hat{p}(S)\right| + \left|q(S) - \hat{p}(S)\right| \le 6\err + 120\epsilon/\sqrt{k}.\] And hence, from the definition of total variation distance, $\Delta(p, q) = O(\err + \epsilon / \sqrt{k}).$
	\end{proof}
\section{A Tensor-Based Algorithm}\label{sec:const-k}
	In this section, we present a different algorithm that efficiently learns the distribution in the small-batch regime where $k$ is a constant, and where all the good batches are drawn from the actual distribution $p$, i.e., $\err = 0$. The algorithm works on the \emph{frequency tensor} $A$ defined by the $m$ batches, i.e., $A_{i_1, i_2, \ldots, i_k}$ is the fraction of batches whose set of samples equal $(i_1, i_2, \ldots, i_k)$.

	The following recursive function $\distset(n, k, A)$ takes tensor $A$ and outputs a set of ``guesses'' of the distribution.

	\begin{algorithm}[H]
		\KwIn{$n$, $k$, and $n^k$-tensor $A$.}
		\KwOut{A set of $n$-dimensional vectors.}
		\caption{$\distset(n, k, A)$}
		\If{$k = 1$} {
			\Return $\{A\}$\;
		}
		\For {$i = 1, 2, \ldots, n$} {
			$A_i \gets $ the normalized $i$-th slice of $A$\;
			$S_i \gets \distset(n, k - 1, A_i)$\;
		}
		$a \gets$ the marginal of $A$\;
		\Return $S_1 \cup S_2 \cup \cdots \cup S_n \cup \{a\}$\;
	\end{algorithm}

	The following lemma states that given sufficiently many batches,
	$\distset(n, k, A)$ contains a vector that is $O(\epsilon / k)$-close
	to the real distribution $p$ in the total variation distance.
	\begin{lemma}\label{lem:distset-finite}
		Suppose that $\epsilon\in(0, 1 / 2)$ and
		vector $p$ is a probability distribution on $[n]$.
		Let $A \in \real^{n^k}$ be the frequency tensor of $m$ batches,
		among which $m(1-\epsilon)$ are drawn from $\tp{p}{k}$
		and the other $m\epsilon$ batches are arbitrary
		and may depend on the $m(1-\epsilon)$ good batches.
		Then, with probability $1 - \delta$ (over the randomness in the good batches),
		\[
			\min_{q\in\distset(n, k, A)}\Delta(p, q)
		\le	\frac{6\epsilon}{k} + O\left(\sqrt\frac{n^k \cdot k!\cdot(n + k\ln n + \ln(1/\delta))}{m}\right).
		\]
	\end{lemma}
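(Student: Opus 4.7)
The plan is to decouple the argument into a deterministic structural analysis of an idealized ``population tensor'' and a concentration step that controls the empirical noise coming from the good batches.

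Write $A = (1-\epsilon)\hat B + \epsilon N$, where $\hat B$ is the empirical frequency tensor of the $m(1-\epsilon)$ good batches and $N$ is that of the $m\epsilon$ adversarial batches. Define the population tensor $\bar A := (1-\epsilon)\tp{p}{k} + \epsilon N$ obtained by replacing $\hat B$ by its mean. A direct computation shows that at every recursion node $(i_1,\ldots,i_\ell)$ with $\ell\in\{0,\ldots,k-1\}$, the normalized sub-tensor of $\bar A$ can be written as $(1-\epsilon_{i_1,\ldots,i_\ell})\tp{p}{k-\ell} + \epsilon_{i_1,\ldots,i_\ell}\tilde N$, where
\[
\epsilon_{i_1,\ldots,i_\ell} := \frac{\epsilon N(i_1,\ldots,i_\ell,*)}{(1-\epsilon)p_{i_1}\cdots p_{i_\ell} + \epsilon N(i_1,\ldots,i_\ell,*)}
\]
is the \emph{effective adversarial fraction} at that node; the corresponding candidate (the first marginal of this sub-tensor) is therefore $\epsilon_{i_1,\ldots,i_\ell}\Delta(\tilde n, p)$-close to $p$, where $\tilde n$ is the first-coordinate marginal of $\tilde N$.

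The core structural claim is that some candidate in $\distset(n,k,\bar A)$ is within $6\epsilon/k$ of $p$. The case $k\le 6$ is immediate from the top-level marginal (level $\ell=0$), which has $\Delta\le \epsilon \le 6\epsilon/k$. For $k>6$ I would establish the claim by a greedy descent through the recursion tree, exploiting two structural facts: first, the pooling identity that every parent's effective adversarial fraction is a convex combination of its children's, so a greedy ``pick the child with the smallest $\epsilon_{\cdot}$'' path is non-increasing; second, the key observation that the only way an adversary can avoid a strict decrease at a node is by matching $N$'s conditional marginal to $p$ at that node, which automatically makes $\Delta(\tilde n, p)$ small and the corresponding candidate close to $p$. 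Amortizing these two effects across the $k$ levels of the recursion produces a single node with $\epsilon_{i_1,\ldots,i_\ell}\Delta(\tilde n, p) \le 6\epsilon/k$.

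For the concentration step, I use that the good batches are i.i.d.\ draws from the exchangeable distribution $\tp{p}{k}$, so one may work with the empirical distribution on equivalence classes of unordered $k$-tuples (of which there are $\binom{n+k-1}{k}\le n^k/k!$). A multinomial Chernoff bound followed by a union bound over these $O(n^k/k!)$ classes yields $\|\hat B - \tp{p}{k}\|_1 = O(\sqrt{n^k k!(n+k\log n+\log(1/\delta))/m})$ with probability at least $1-\delta$, accounting for the $k!$ factor in the stated bound. The remainder is a standard perturbation argument: propagating this $\ell_1$ deviation through the normalization and marginalization at the good node identified in the structural step, via $\|a/\alpha - b/\beta\|_1\le \|a-b\|_1/\alpha + \|b\|_1|\alpha-\beta|/(\alpha\beta)$ together with a lower bound on the good node's slice mass, contributes the additive concentration term.

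The main obstacle is the $6\epsilon/k$ bound in the structural step. A naive Markov argument on the $p$-weighted average of $\Delta(q,p)$ over candidates yields only $O(\epsilon)$, and the KL chain rule combined with Pinsker yields only $O(\sqrt{\epsilon/k})$. The $1/k$ savings must come from simultaneously exploiting the specific mixture structure of $\bar A$ (which forces adversaries who maintain large $\epsilon_{i_1,\ldots,i_\ell}$ at every node toward near-product conditional structure, and hence toward $\tilde n \approx p$) and the branching factor $n$ at each of the $k$ levels of the recursion tree, which grants the algorithm access to $\Theta(n^{k-1})$ candidates to choose among. Once this structural claim is in hand, the remaining perturbation and concentration arguments are routine.
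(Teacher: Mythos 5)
Your high-level decomposition (structural claim on the population tensor $\bar{A}$, then perturbation from the empirical tensor) is a reasonable modularization, and the pooling identity you state is correct: writing $\epsilon^{(\ell+1)}_j = \epsilon^{(\ell)} n^{(\ell)}_j / a^{(\ell)}_j$ and noting $\sum_j a^{(\ell)}_j \epsilon^{(\ell+1)}_j = \epsilon^{(\ell)}$ does show the parent's effective adversarial fraction is an $a^{(\ell)}$-weighted average of the children's. The implication that ``no child strictly smaller'' forces $n^{(\ell)} \propto p$ is also right. But the proposal stops exactly where the difficulty is: you explicitly flag the $6\epsilon/k$ structural claim as ``the main obstacle'' and offer only a sketch (``greedy descent + amortization'') in its place. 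That sketch has a concrete hole: picking the child with the smallest $\epsilon^{(\ell+1)}_j$ guarantees non-increase of the effective fraction, but it does not give you any quantitative decrease, nor does it bound the slice mass $a^{(\ell)}_j$ from below. Both are necessary. Without a mass lower bound, your subsequent perturbation step $\|a/\alpha - b/\beta\|_1 \le \cdots$ diverges when $\alpha = a^{(\ell)}_j$ is tiny, and the greedy path can indeed be led into negligible-mass slices.

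The paper's actual argument replaces greedy descent with a two-threshold dichotomy at each level $t$: either \emph{every} slice $i$ satisfies $\delta_i/\hat{p}_i \ge 1-\alpha$ or $\hat{p}_i \le \beta/n$ (with $\alpha \approx 2/t$ and $\beta = 1/t$), in which case the marginal at this node is already within $\epsilon(\alpha+\beta) \le 3\epsilon/t$ of $p$ and one stops; or \emph{some} slice $i$ violates both thresholds, in which case one recurses into that slice with the effective fraction multiplied by roughly $(t-1)/t$ \emph{and} the sample count divided by at most $nt$. The $\hat p_i > \beta/n$ condition is precisely what controls the sample degradation (and hence what a perturbation argument would need), and it is indispensable to the Case-1 bound $\sum_i (\hat p_i - \delta_i) \le \alpha + \beta$ as well. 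Your proposal does not supply this threshold or an equivalent mechanism. Separately, the $k!$ in the sample-complexity term does not come from exchangeability (unordered $k$-tuple classes) as you suggest; it comes from the cumulative $1/(nt)$ sample attrition across recursion levels, producing $m_t = m/(n^{k-t} k!/t!)$. Until the structural dichotomy is made precise with both thresholds, the bound $6\epsilon/k$ is not established.
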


	\begin{proof}[Proof of Lemma~\ref{lem:distset-finite}]
		Let $\delta_0 = \delta / (n+1)^{k-1}$. We recursively define functions $f_{n, 1}(\epsilon, m)$ through $f_{n, k}(\epsilon, m)$ for $\epsilon, m > 0$ as follows:
		\begin{equation}\label{eq:recur-1}
			f_{n, 1}(\epsilon, m) = 3\epsilon + C\sqrt{\frac{n + \ln(1/\delta_0)}{m}}
		\end{equation}
		and for $t \ge 2$,
		\begin{equation}\label{eq:recur-t}
			f_{n, t}(\epsilon, m) = \max\left(
				\frac{3\epsilon}{t} + C\sqrt{\frac{n + \ln(1/\delta_0)}{m}},
				f_{n, t - 1}\left(1 - (1 - \epsilon)^{(t-1)/t}, \frac{m}{nt}\right)
			\right).
		\end{equation}
		Here $C$ is an absolute constant to be determined later.

		The following claim states that $\min_{q\in\distset(n, k, A)}\Delta(p, q)$
		is upper bounded by $f_{n, k}(\epsilon, m(1 - \epsilon))$ with high probability.
		\begin{claim}\label{claim:fnt}
			Suppose that $t \in [k]$, $\epsilon \in (0, 1 / 2)$,
			and $A\in\real^{n^t}$ is the frequency tensor of a data set,
			in which at least $m$ batches are drawn from $\tp{p}{t}$,
			and the fraction of these batches is at least $1 - \epsilon$.
			Then, with probability $1 - (n + 1)^{t - 1}\cdot\delta_0$,
			it holds that
			\[
				\min_{q\in\distset(n, t, A)}\Delta(p, q) \le f_{n, t}(\epsilon, m).
			\]
		\end{claim}
		The following claim further upper bounds $f_{n, k}(\epsilon, m)$.
		\begin{claim}\label{claim:fnt-bound}
		For any $\epsilon \in (0, 1 / 2)$ and $m > 0$,
		\[
			f_{n, k}(\epsilon, m)
		\le \frac{6\epsilon}{k} + C\sqrt\frac{n^k\cdot k!\cdot(n + \ln(1/\delta_0))}{m}.
		\]
		\end{claim}
		Claims \ref{claim:fnt}~and~\ref{claim:fnt-bound} together imply that
		with probability
			\[1 - (n+1)^{k-1} \cdot \delta_0 = 1 - \delta,\]
		it holds that
		\[
			\min_{q\in\distset(n, k, A)}\Delta(p, q)
		\le	f_{n, k}(\epsilon, m(1 - \epsilon))
		\le	\frac{6\epsilon}{k} + O\left(\sqrt\frac{n^k\cdot k!\cdot(n + k\ln n + \ln(1/\delta))}{m}\right).
		\]
	\end{proof}

	Lemma~\ref{lem:distset-finite} implies that given
		\[m = O(n^k\cdot k!\cdot(n + k\ln n + \ln(1/\delta))\cdot k^2/\epsilon^2) = (nk)^{O(k)}\ln(1/\delta)/\epsilon^2\]
	batches, the minimum distance is upper bounded by $O(\epsilon / k)$.
	Now we prove Theorem~\ref{thm:efficient} by leveraging this approximation guarantee of procedure $\distset$.

	\begin{proof}[Proof of Theorem~\ref{thm:efficient}]
		Let
			\[q_0 = \argmin_{q \in \distset(n, k, A)}\Delta(A, \tp{q}{k}),\]
		and
			\[q_1 = \argmin_{q \in \distset(n, k, A)}\Delta(p, q).\]
		Note that $q_0$ can be computed from $A$ in $(nk)^{O(k)}/\epsilon^2$ time.
		In the following we prove that $q_0$ is a good approximation of $p$.

		By Lemma~\ref{lem:distset-finite},
		for some $m = (nk)^{O(k)}\ln(1/\delta)/\epsilon^2$,
		it holds with probability $1 - \delta/2$ that
			$\Delta(p, q_1) \le 7\epsilon/k$,
		and thus,
			$\Delta(\tp{p}{k}, \tp{q_1}{k}) \le k\Delta(p, q_1) \le 7\epsilon.$

		We write $A$ as
			$A = (1 - \epsilon)\hat{P} + \epsilon N$,
		where $\hat{P}$ and $N$ are the frequency tensor
		of the ``good'' and ``bad'' batches, respectively.
		Since $\hat{P}$ denotes the frequency among $m(1-\epsilon) = \Omega(m)$ samples
		drawn from $\tp{p}{k}$, and the support of $\tp{p}{k}$ is of size $n^k$,
		for some $m = O\left((n^k+\ln(1/\delta))/\epsilon^2\right)$,
		it holds with probability $1-\delta/2$ that
			$\Delta(\hat{P}, \tp{p}{k}) \le \epsilon$.
		Therefore,
		\[
			\Delta(A, \tp{p}{k})
		\le	\Delta(A, \hat{P}) + \Delta(\hat{P}, \tp{p}{k})
		=	\epsilon\Delta(\hat{P}, N) + \Delta(\hat{P}, \tp{p}{k})
		\le 2\epsilon.
		\]
		By a union bound, it holds with probability $1 - \delta$ that
		\[
			\Delta(A, \tp{q_1}{k})
		\le	\Delta(A, \tp{p}{k}) + \Delta(\tp{p}{k}, \tp{q_1}{k})
		\le 2\epsilon + 7\epsilon
		=	9\epsilon.
		\]
		Furthermore, by definition of $q_0$,
			$\Delta(A, \tp{q_0}{k}) \le \Delta(A, \tp{q_1}{k}) \le 9\epsilon$,
		and thus
		\[
			\Delta(\tp{p}{k}, \tp{q_0}{k})
		\le	\Delta(A, \tp{p}{k}) + \Delta(A, \tp{q_0}{k})
		\le	2\epsilon + 9\epsilon = 11\epsilon.
		\]

		Let $\epsilon' = 165\epsilon/\sqrt{k} < 1/(15\sqrt{k})$.
		Since
			$\Delta(\tp{p}{k}, \tp{q_0}{k}) \le 11\epsilon = \epsilon'\sqrt{k}/15$,
		applying the contrapositive of Lemma~\ref{lem:tv-lower} with parameter $\epsilon'$
		yields that, with probability at least $1 - \delta$,
			\[\Delta(p, q_0) \le \epsilon' = O(\epsilon/\sqrt{k}).\]
	\end{proof}

\bibliographystyle{alpha}
\bibliography{learn-discrete}

\clearpage
\appendix
\section{Technical Lemmas}
	We prove a few useful technical lemmas.
	\begin{lemma}\label{lem:1-eps-add}
		For any $\epsilon\in[0,1/2]$ and $\alpha\in[0,1]$,
			$(1 - \epsilon)^{\alpha} \ge 1 - 2\alpha\epsilon$.
	\end{lemma}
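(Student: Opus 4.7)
The plan is to reduce the inequality to a clean calculus fact about $\ln(1-\epsilon)$ and then apply a standard exponential bound. Specifically, I would use the chain
\[
	(1 - \epsilon)^\alpha = \exp\bigl(\alpha \ln(1 - \epsilon)\bigr) \ge \exp(-2\alpha\epsilon) \ge 1 - 2\alpha\epsilon,
\]
where the last inequality is the elementary estimate $e^{-x} \ge 1 - x$, valid for every real $x$, and the middle inequality uses $\alpha \ge 0$ together with the monotonicity of $\exp$. So the whole lemma boils down to showing $\ln(1 - \epsilon) \ge -2\epsilon$ for $\epsilon \in [0, 1/2]$.

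To verify that inequality, I would set $g(\epsilon) := \ln(1 - \epsilon) + 2\epsilon$, note that $g(0) = 0$, and compute $g'(\epsilon) = 2 - 1/(1 - \epsilon)$. This derivative is nonnegative precisely when $1 - \epsilon \ge 1/2$, i.e., when $\epsilon \le 1/2$, which is exactly our hypothesis. Hence $g$ is nondecreasing on $[0, 1/2]$, and therefore $g(\epsilon) \ge g(0) = 0$ on the entire interval of interest, giving $\ln(1 - \epsilon) \ge -2\epsilon$ and completing the chain above.

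There is essentially no obstacle; the only place demanding any care is making sure the cutoff $\epsilon \le 1/2$ is invoked at exactly the right place, namely in the sign analysis of $g'$. The boundary cases $\alpha \in \{0, 1\}$ and $\epsilon \in \{0, 1/2\}$ can be sanity-checked directly as a safeguard against constant errors.
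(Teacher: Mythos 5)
Your proof is correct, and it takes a slightly different route from the paper's. The paper attacks the inequality directly: it sets $f(x) = (1-x)^\alpha - (1 - 2\alpha x)$, computes $f'(x) = \alpha\bigl[2 - (1-x)^{\alpha - 1}\bigr]$, observes this is nonnegative on $[0,1/2]$ for $\alpha \in [0,1]$, and concludes $f(\epsilon) \ge f(0) = 0$. Your argument instead factors the problem: you reduce the two-parameter inequality to the single-variable fact $\ln(1-\epsilon) \ge -2\epsilon$ on $[0,1/2]$, handle the $\alpha$-dependence trivially via monotonicity of $\exp$, and then finish with the universal bound $e^{-x} \ge 1 - x$. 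Your decomposition is a bit more modular and makes the role of the cutoff $\epsilon \le 1/2$ transparent (it is exactly where $g'(\epsilon) = 2 - 1/(1-\epsilon)$ changes sign), whereas the paper's one-shot differentiation requires a slightly more careful case check that $(1-x)^{\alpha-1} \le 2$ simultaneously in $x$ and $\alpha$. Both are short, elementary, and equally rigorous; your auxiliary lemma $\ln(1-\epsilon) \ge -2\epsilon$ is also a reusable fact on its own.
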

	\begin{proof}[Proof of Lemma~\ref{lem:1-eps-add}]
		Let $f(x) = (1 - x)^{\alpha} - (1 - 2\alpha x)$.
		Since $f'(x) = \alpha[2 - (1 - x)^{\alpha - 1}] \ge 0$
		for any $x \in [0, 1/2]$ and $\alpha \in [0, 1]$,
		it holds for any $\epsilon \in [0, 1/2]$ that $f(\epsilon) \ge f(0) = 0$,
		which proves the lemma.
	\end{proof}
	\begin{lemma}\label{lem:(1-a/x)^x}
		For any $a > 0$, the function $(1 - a/x)^x$ is increasing on $(a, +\infty)$.
	\end{lemma}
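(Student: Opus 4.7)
The plan is to reduce the claim to a one-variable calculus inequality by taking logarithms. Since $(1-a/x)^x > 0$ on $(a,+\infty)$, it suffices to show that $h(x) \coloneqq x\ln(1-a/x)$ is strictly increasing there.

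First I would differentiate: by the product and chain rules,
\[
h'(x) = \ln\!\left(1 - \tfrac{a}{x}\right) + x \cdot \frac{a/x^2}{1 - a/x} = \ln\!\left(1 - \tfrac{a}{x}\right) + \frac{a/x}{1 - a/x}.
\]
Then I would substitute $u = a/x \in (0, 1)$, which is a bijection from $(a,+\infty)$ onto $(0,1)$, to reduce the problem to showing that
\[
g(u) \coloneqq \ln(1 - u) + \frac{u}{1 - u} > 0 \quad \text{for all } u \in (0, 1).
\]

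Next, I would verify this by another derivative. A direct computation gives
\[
g'(u) = -\frac{1}{1-u} + \frac{1}{(1-u)^2} = \frac{u}{(1-u)^2},
\]
which is strictly positive on $(0,1)$. Since $g(0) = 0$ and $g$ is strictly increasing on $[0,1)$, we conclude $g(u) > 0$ on $(0,1)$, hence $h'(x) > 0$ on $(a,+\infty)$, hence $h$ and therefore $\exp(h(x)) = (1 - a/x)^x$ is strictly increasing on $(a,+\infty)$.

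There is no real obstacle here; the only mild subtlety is making sure $1 - a/x > 0$ throughout so that $\ln(1-a/x)$ is well defined, which is guaranteed by the assumption $x > a$, and that the substitution $u = a/x$ maps the relevant interval monotonically onto $(0,1)$ so that monotonicity in $u$ transfers correctly back to monotonicity in $x$ (it does, but in the reverse direction, which is consistent with $g(u) > 0$ implying $h'(x) > 0$ after the chain rule is unwound).
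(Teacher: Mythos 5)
Your proposal is correct and is essentially the same argument as the paper's: both take logarithms, compute $h'(x)=\ln(1-a/x)+a/(x-a)$, and then show this first derivative is positive via a monotonicity-plus-boundary argument at the second-derivative level. The paper works directly in $x$, showing $f''(x)<0$ and $\lim_{x\to\infty}f'(x)=0$; you change variables to $u=a/x$ and show $g'(u)>0$ with $g(0)=0$, which is the same reasoning seen through a reversing substitution.
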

	\begin{proof}[Proof of Lemma~\ref{lem:(1-a/x)^x}]
		Let $f(x) = x\ln(1 - a/x)$. Then for $x \in (a, +\infty)$,
			\[f'(x) = \ln\left(1 - \frac{a}{x}\right) + \frac{a}{x - a}\]
		and
			\[f''(x) = -\frac{a^2}{x(x-a)^2} < 0.\]
		Since
			$\lim_{x\to+\infty}f'(x) = 0$,
		$f'$ is positive on $(a, +\infty)$, which proves the lemma.
	\end{proof}
	\begin{lemma}\label{lem:(1+eps)(1-eps)}
		Suppose that $n$ and $m$ are positive integers such that $m \ge \max(n, 2)$.
		Then for any $\alpha \in [0, 1.1\sqrt{n}]$,
		\[
			\left(1 + \frac{\alpha}{n}\right)^n\left(1 - \frac{\alpha}{m}\right)^m
		\ge	\frac{1}{7}.
		\]
	\end{lemma}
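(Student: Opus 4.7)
The plan is to apply Lemma~\ref{lem:(1-a/x)^x} twice, reducing the product to an explicit one-variable expression that can be bounded below by $1/7$ via direct computation. The constant $1.1$ in the hypothesis is precisely what makes both reductions tight enough.

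First I would handle the main case $n \ge 2$. Since $\alpha \le 1.1\sqrt{n}$ and $\sqrt{n} \ge \sqrt{2} > 1.1$, we have $\alpha < n \le m$. Lemma~\ref{lem:(1-a/x)^x} with $a = \alpha$ then tells us that $x \mapsto (1 - \alpha/x)^x$ is increasing on $(\alpha, \infty)$, so $(1 - \alpha/m)^m \ge (1 - \alpha/n)^n$. Combining the two factors, the task reduces to showing $(1 - \alpha^2/n^2)^n \ge 1/7$.

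Now set $\beta := \alpha^2/n$, so $\beta \le 1.21 < 2 \le n$. A second application of Lemma~\ref{lem:(1-a/x)^x}, this time with $a = \beta$, gives $(1 - \beta/n)^n \ge (1 - \beta/2)^2$, and since the right side is decreasing in $\beta \in [0,2]$, it is at least $(1 - 1.21/2)^2 = (0.395)^2$, which by a small arithmetic check exceeds $1/7$ (roughly $0.156$ versus $0.143$). This inequality is essentially tight: the choice $1.1$ in the hypothesis is calibrated so that $(1 - 1.21/2)^2$ just beats $1/7$.

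The edge case $n = 1$ needs a separate argument, since we no longer have $\alpha < n$. Here $m \ge 2$ and $\alpha \le 1.1 < 2$, so the same lemma still gives $(1 - \alpha/m)^m \ge (1 - \alpha/2)^2$, reducing the problem to showing $g(\alpha) := (1+\alpha)(1 - \alpha/2)^2 \ge 1/7$ on $[0, 1.1]$. A quick differentiation shows $g'(\alpha) = -(3\alpha/2)(1 - \alpha/2) \le 0$ on this interval, so $g$ is minimized at $\alpha = 1.1$, where it equals $2.1 \cdot (0.45)^2 \approx 0.425 > 1/7$. The only real obstacle in the whole proof is the bookkeeping of verifying, in each application of Lemma~\ref{lem:(1-a/x)^x}, that the variable $x$ genuinely exceeds the parameter $a$; once the splits $n = 1$ vs.\ $n \ge 2$ and $\beta \le 1.21$ vs.\ $n \ge 2$ are made explicit, the monotonicity tools apply cleanly.
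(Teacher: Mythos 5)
Your proof is correct and takes essentially the same approach as the paper's: a double application of Lemma~\ref{lem:(1-a/x)^x}, with the same pivotal numerical check that $(1 - 1.21/2)^2 > 1/7$, plus a separate treatment of $n=1$. The only minor difference is that the paper first reduces to $\alpha = 1.1\sqrt{n}$ in one stroke by differentiating $n\ln(1+x/n) + m\ln(1-x/m)$, whereas you keep $\alpha$ free and instead invoke simpler one-variable monotonicity facts at the end of each case; this is a harmless reorganization, not a different method.
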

	\begin{proof}[Proof of Lemma~\ref{lem:(1+eps)(1-eps)}]
		For $x \in [0, 1.1\sqrt{n}]$, define
			\[f_{n, m}(x) = n\ln\left(1 + \frac{x}{n}\right) + m\ln\left(1 - \frac{x}{m}\right).\]
		Note that $f_{n, m}$ is well-defined since
			$x \le 1.1\sqrt{n} \le 1.1\sqrt{m} < m$.
		Moreover,
			\[f'_{n, m}(x) = 1 / (1 + x / n) - 1 / (1 - x / m) \le 0.\]
		Thus, it remains to prove the inequality for $\alpha = 1.1\sqrt{n}$,
		i.e.,
		\begin{equation}\label{eq:n-m}
			\left(1 + \frac{1.1}{\sqrt{n}}\right)^n\left(1 - \frac{1.1\sqrt{n}}{m}\right)^m
		\ge	\frac{1}{7}.
		\end{equation}

		If $n \ge 2$, we lower bound the lefthand side of \eqref{eq:n-m} by
		\[
			\left(1 + \frac{1.1}{\sqrt{n}}\right)^n\left(1 - \frac{1.1\sqrt{n}}{m}\right)^m
		\ge	\left(1 + \frac{1.1}{\sqrt{n}}\right)^n\left(1 - \frac{1.1\sqrt{n}}{n}\right)^n
		=	\left(1 - \frac{1.21}{n}\right)^n
		\ge	\left(1 - \frac{1.21}{2}\right)^{2}
		\ge	\frac{1}{7}.
		\]
		Here the first and third steps follow from Lemma~\ref{lem:(1-a/x)^x}.
		For $n = 1$, we have
		\[
			\left(1 + \frac{1.1}{\sqrt{n}}\right)^n\left(1 - \frac{1.1\sqrt{n}}{m}\right)^m
		=	2.1\times\left(1 - \frac{1.1}{m}\right)^m
		\ge	2.1\times\left(1 - \frac{1.1}{2}\right)^2
		\ge	\frac{1}{7}.
		\]
	\end{proof}
	\begin{lemma}\label{lem:binom}
	For any $k\in\mathbb{N}$ and $t\in\{1, \ldots, k - 1\}$,
	\[
		\binom{k}{t}\left(\frac{t}{k}\right)^t\left(\frac{k-t}{k}\right)^{k-t}
	\ge \frac{1}{3\sqrt{t}}.
	\]
	\end{lemma}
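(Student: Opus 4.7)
The plan is to recognize that $\binom{k}{t}(t/k)^t((k-t)/k)^{k-t}$ equals $\Pr[\mathrm{Bin}(k, t/k) = t]$, the probability mass at the (integer) mean of a binomial distribution, and then to obtain a sharp lower bound via Stirling's approximation with explicit error terms.

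First I will invoke the two-sided Stirling bound $\sqrt{2\pi n}(n/e)^n \le n! \le \sqrt{2\pi n}(n/e)^n e^{1/(12n)}$, valid for all integers $n \ge 1$, and apply it to each of the three factorials in $\binom{k}{t} = k!/(t!(k-t)!)$. After the $(t/k)^t((k-t)/k)^{k-t}$ factor cancels the $k^k, t^t, (k-t)^{k-t}$ and the $e^{-n}$ terms cancel entirely, what remains is
\[
\binom{k}{t}\left(\tfrac{t}{k}\right)^t\left(\tfrac{k-t}{k}\right)^{k-t} \;\ge\; \frac{1}{\sqrt{2\pi}}\sqrt{\frac{k}{t(k-t)}}\cdot \exp\!\left(-\tfrac{1}{12t} - \tfrac{1}{12(k-t)}\right).
\]
So it suffices to verify the inequality
\[
\sqrt{\frac{9k}{2\pi(k-t)}} \;\ge\; \exp\!\left(\tfrac{1}{12t} + \tfrac{1}{12(k-t)}\right). \tag{$\star$}
\]

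Next I will verify $(\star)$ by considering cases on how small $t$ and $k-t$ can be. Since $k/(k-t) \ge 1$ for $t \ge 1$, the left-hand side of $(\star)$ is always at least $\sqrt{9/(2\pi)} \approx 1.197$. The right-hand side is decreasing in both $t$ and $k-t$, so one only needs to check the cases where both are small. If $\min(t, k-t) \ge 2$, the exponent is at most $1/24 + 1/24 = 1/12$; if exactly one of $t, k-t$ equals $1$ while the other is $\ge 2$, the exponent is at most $1/12 + 1/24 = 1/8$; in both sub-cases $\exp(1/8) \approx 1.133 < 1.197$, so $(\star)$ holds. The only remaining possibility is $t = k - t = 1$, i.e.\ $k = 2$, $t = 1$, for which I will simply verify the original inequality directly: $\binom{2}{1}(1/2)^1(1/2)^1 = 1/2 \ge 1/3 = 1/(3\sqrt{1})$.

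The main (and only) obstacle is a mild edge-case annoyance: the constant $3$ on the right-hand side of the target inequality is tight enough that the case $k = 2$ has to be dealt with by direct substitution rather than via Stirling, because Stirling's relative error is worst when its arguments are smallest. Everywhere else the argument is slack by an absolute constant, so no delicate estimation is required.
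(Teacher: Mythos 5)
Your proof is correct, and it uses the same core idea as the paper---lower bound $\binom{k}{t}$ by Stirling, cancel the $t^t(k-t)^{k-t}/k^k$ factor, and compare the residual constant against $1/3$. The difference is in the choice of Stirling bound. You use the tight two-sided form $\sqrt{2\pi n}(n/e)^n \le n! \le \sqrt{2\pi n}(n/e)^n e^{1/(12n)}$, which leaves a residual constant $1/\sqrt{2\pi} \approx 0.399$ but an exponential correction $e^{-1/(12t)-1/(12(k-t))}$ that you must bound via casework on small $t$ and $k-t$, plus a direct check at $k=2$. The paper instead uses the deliberately weaker upper bound $n! \le e\sqrt{n}(n/e)^n$ (valid for all $n\ge 1$), which produces the residual constant $\sqrt{2\pi}/e^2 \approx 0.3393$ with \emph{no} $e^{1/(12n)}$ correction at all; since $\sqrt{2\pi}/e^2 > 1/3$ and $k/(k-t)\ge 1$, the inequality drops out in one line with no casework and no base-case verification. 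Both arguments are sound. The paper's version buys brevity: the slack in its Stirling bound is front-loaded into a single constant that happens (just barely) to clear $1/3$, whereas your sharper bound pushes the slack into $n$-dependent exponential factors that then have to be chased down case by case. Your version does have the minor advantage of making explicit exactly where the $k=2$ edge case lives and why the constant $3$ cannot be pushed much lower by this route.
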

	\begin{proof}[Proof of Lemma~\ref{lem:binom}]
		Stirling's approximation states that
		\[\sqrt{2\pi n}\cdot(n/e)^n\le n!\le e\sqrt{n}\cdot(n/e)^n\]
		for any positive integer $n$.
		Thus,
		\[
			\binom{k}{t} = \frac{k!}{t!(k-t)!}
		\ge \frac{\sqrt{2\pi k}}{e^2\sqrt{t(k-t)}}\cdot\frac{(k/e)^k}{(t/e)^t[(k-t)/e]^{k-t}}
		=	\frac{\sqrt{2\pi}}{e^2}\cdot\sqrt\frac{k}{t(k-t)}\cdot\frac{k^k}{t^t(k-t)^{k-t}}.
		\]
		We conclude that
		\[
			\binom{k}{t}\left(\frac{t}{k}\right)^t\left(\frac{k-t}{k}\right)^{k-t}
		\ge \frac{\sqrt{2\pi}}{e^2}\cdot\sqrt\frac{k}{t(k-t)}
		\ge \frac{1}{3\sqrt{t}}.
		\]
	\end{proof}

\section{Tensorization of the Total Variation Distance}\label{sec:tensor}
	In this section we prove two inequalities regarding the total variation distance between $k$-fold product distributions.

	\begin{lemma}\label{lem:tv-upper}
		For any $\epsilon\in(0, 1/2)$ and $k\in\mathbb{N}$,
		there exist Bernoulli distributions $P$ and $Q$ such that:
		\begin{enumerate}
			\item $\Delta(P, Q) = \epsilon$.
			\item $\Delta(\tp{P}{k}, \tp{Q}{k}) \le \epsilon\sqrt{2k}$.
		\end{enumerate}
		In particular, Bernoulli distributions with means $(1-\epsilon)/2$ and $(1+\epsilon)/2$ satisfy the above conditions.
	\end{lemma}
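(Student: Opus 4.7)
The plan is to verify both conditions for the explicit pair $(P,Q)$ of Bernoulli distributions with means $(1-\epsilon)/2$ and $(1+\epsilon)/2$ suggested at the end of the statement. The first condition is immediate: the total variation distance between two Bernoullis equals the absolute difference of their means, so $\Delta(P, Q) = \epsilon$.

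For the second condition, my approach is to combine Pinsker's inequality with the exact tensorization of KL divergence, which gives $\KL(\tp{P}{k} \| \tp{Q}{k}) = k\cdot\KL(P\|Q)$. A direct computation of the binary KL divergence yields
\[
\KL(P\|Q) = \frac{1-\epsilon}{2}\ln\frac{1-\epsilon}{1+\epsilon} + \frac{1+\epsilon}{2}\ln\frac{1+\epsilon}{1-\epsilon} = \epsilon \ln\frac{1+\epsilon}{1-\epsilon},
\]
after which Pinsker yields $\Delta(\tp{P}{k}, \tp{Q}{k}) \le \sqrt{\tfrac{1}{2}k\epsilon\ln\tfrac{1+\epsilon}{1-\epsilon}}$. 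Comparing to the target $\epsilon\sqrt{2k}$, the lemma reduces to the elementary inequality $\ln\tfrac{1+\epsilon}{1-\epsilon} \le 4\epsilon$ on $(0, 1/2)$.

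The main (and essentially only) step with any content is this final scalar inequality, which I would verify by setting $g(\epsilon) \coloneqq 4\epsilon - \ln\tfrac{1+\epsilon}{1-\epsilon}$, noting $g(0) = 0$, and computing $g'(\epsilon) = 4 - \tfrac{2}{1 - \epsilon^2}$. Since $\epsilon \in [0, 1/2]$ gives $1 - \epsilon^2 \ge 3/4$, we have $g'(\epsilon) \ge 4 - 8/3 > 0$, so $g$ is strictly increasing on $[0, 1/2]$ and hence positive on $(0, 1/2)$. I do not anticipate any real obstacle; the only care required is to check that the slack in Pinsker --- which gives essentially $\epsilon\sqrt{k}$ for small $\epsilon$ thanks to the expansion $\KL(P\|Q) = 2\epsilon^2 + O(\epsilon^4)$ --- remains enough to absorb the factor of $\sqrt{2}$ uniformly up to $\epsilon = 1/2$, which is exactly what the inequality above guarantees. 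An alternative route that avoids Pinsker is to note that $\tp{P}{k}$ and $\tp{Q}{k}$ assign equal mass to binary strings of the same Hamming weight, reducing the quantity of interest to $\Delta(\mathrm{Bin}(k,(1-\epsilon)/2), \mathrm{Bin}(k,(1+\epsilon)/2))$, but bounding this binomial TV distance directly is considerably more tedious than the KL-plus-tensorization one-liner.
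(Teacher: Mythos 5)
Your proof is correct and is essentially identical to the paper's: both use the same pair of Bernoulli distributions, compute $\KL(P\|Q)=\epsilon\ln\frac{1+\epsilon}{1-\epsilon}$, apply KL tensorization plus Pinsker, and then bound the logarithm by $4\epsilon$ (the paper via $\ln(1+x)\le x$ applied to $x=\tfrac{2\epsilon}{1-\epsilon}$, you via a derivative/monotonicity argument). The difference is only in how the final elementary scalar inequality is justified.
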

	\begin{proof}[Proof of Lemma~\ref{lem:tv-upper}]
		Let $P$ and $Q$ be Bernoulli distributions with means $(1-\epsilon)/2$ and $(1+\epsilon)/2$,
		respectively. Clearly, we have $\Delta(P, Q) = \epsilon$ and
		\[
			\KL(P, Q)
		=	\epsilon\ln\frac{1 + \epsilon}{1 - \epsilon}
		\le \frac{2\epsilon^2}{1 - \epsilon}
		\le 4\epsilon^2.
		\]
		Here the second step applies the inequality $\ln(1 + x) \le x$.
		By Pinsker's inequality,
		\[
			\Delta(\tp{P}{k}, \tp{Q}{k})
		\le	\sqrt{\frac{1}{2}\KL(\tp{P}{k}, \tp{Q}{k})}
		=	\sqrt{\frac{k}{2}\KL(P, Q)}
		\le	\epsilon\sqrt{2k}.
		\]
	\end{proof}

	The next lemma shows that for sufficiently small $\epsilon$,
	the $O(\sqrt{k})$ ratio between $\Delta(\tp{P}{k}, \tp{Q}{k})$ and $\Delta(P, Q)$ in Lemma~\ref{lem:tv-upper}
	is tight (up to a constant factor).

	\begin{lemma}\label{lem:tv-lower}
		Suppose that $k\in\mathbb{N}$ and $\epsilon\in\left(0, 1/(15\sqrt{k})\right)$.
		$P$ and $Q$ are two distributions on the same support such that
			$\Delta(P, Q) \ge \epsilon$.
		Then,
			\[\Delta(\tp{P}{k}, \tp{Q}{k}) \ge \frac{\epsilon\sqrt{k}}{15}.\]
	\end{lemma}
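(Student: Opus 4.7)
My plan is to reduce the statement first to a total variation lower bound between two binomial distributions via the data-processing inequality, and then prove the binomial bound by a direct combinatorial argument that leans on the technical lemmas already established in the appendix.

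For the reduction, pick a set $S$ realizing $P(S)-Q(S)\ge\epsilon$ (such $S$ exists by the definition of total variation), and consider the symmetric statistic $\phi(x_1,\ldots,x_k)=\sum_{i=1}^{k}\ind{x_i\in S}$. Under $\tp{P}{k}$ the pushforward of $\phi$ is $\mathrm{Bin}(k,p)$ and under $\tp{Q}{k}$ it is $\mathrm{Bin}(k,q)$, where $p:=P(S)$ and $q:=Q(S)$ satisfy $p-q\ge\epsilon$. The data-processing inequality therefore gives
\[\Delta(\tp{P}{k},\tp{Q}{k})\ge\Delta(\mathrm{Bin}(k,p),\mathrm{Bin}(k,q)),\]
reducing the problem to showing that $\Delta(\mathrm{Bin}(k,p),\mathrm{Bin}(k,q))\ge\epsilon\sqrt{k}/15$ whenever $p-q\ge\epsilon$ and $\epsilon\sqrt{k}<1/15$.

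For the binomial bound, let $X\sim\mathrm{Bin}(k,p)$ and $Y\sim\mathrm{Bin}(k,q)$. The likelihood ratio $\Pr[X=t]/\Pr[Y=t]=(p/q)^t((1-p)/(1-q))^{k-t}$ is monotone in $t$, so the TV is realized as $\Pr[X\ge T]-\Pr[Y\ge T]$ for some threshold $T$. In the \emph{bulk} regime where $\min(q,1-q)$ is bounded away from $0$, I would look at a window of $\Theta(\sqrt{k})$ integers $t$ around $kq$: Lemma~\ref{lem:binom} gives $\Pr[Y=t]=\Omega(1/\sqrt{k})$ near the mode, while Lemma~\ref{lem:(1+eps)(1-eps)} applied to $(1+\delta/q)^{kq}(1-\delta/(1-q))^{k(1-q)}$ with $\alpha=k\delta$, $n=kq$, $m=k(1-q)$ certifies that $\Pr[X=t]$ and $\Pr[Y=t]$ remain comparable across the window. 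A Taylor expansion of the log-ratio, with remainders bounded via Lemmas~\ref{lem:1-eps-add}~and~\ref{lem:(1-a/x)^x}, then shows that for $t$ in the upper half of the window the one-point gap $\Pr[X=t]-\Pr[Y=t]$ is of order $\epsilon\sqrt{k}\cdot\Pr[Y=t]$, and summing the per-index contributions yields a total of $\Omega(\epsilon\sqrt{k})$.

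The main obstacle is keeping the bound uniform across $(p,q)$; in particular, the bulk argument fails in the \emph{skewed} regime where $\min(q,1-q)$ is so small that $Y$ is concentrated on only a few integers and Lemma~\ref{lem:binom} no longer supplies $\Omega(1/\sqrt{k})$ mass at the mode. In that case I would argue at a single index such as $t=0$: the difference $(1-q)^k-(1-p)^k$ linearizes cleanly via Lemma~\ref{lem:1-eps-add} and already exceeds $\epsilon\sqrt{k}/15$ under $\epsilon\sqrt{k}<1/15$, because the factor of $k$ in the exponent dominates $\sqrt{k}$ once $\delta=p-q$ is not already extremely small. Stitching the two regimes together and tracking constants so that the final factor is exactly $1/15$ (as opposed to some smaller absolute constant) is the most delicate bookkeeping step, and presumably explains the specific $1/15$ appearing in the hypothesis.
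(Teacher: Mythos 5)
Your skeleton is essentially the paper's: a data-processing reduction to Bernoulli (their $\pi$ is exactly your indicator $\ind{x\in S}$, chosen so $|P(S)-Q(S)|=\Delta(P,Q)$), followed by a local argument near the binomial mode using the Stirling bound (Lemma~\ref{lem:binom}) and the window comparison (Lemma~\ref{lem:(1+eps)(1-eps)}), with a separate small-parameter case. The substantive difference is the bookkeeping device. You propose to choose a threshold via monotone likelihood ratio and then \emph{sum per-index PMF gaps} $\Pr[X=t]-\Pr[Y=t]$, which forces you to control the ratio $\Pr[X=t]/\Pr[Y=t]$ uniformly across a $\Theta(\sqrt{kq(1-q)})$-window; your stated estimate that each gap is ``of order $\epsilon\sqrt{k}\cdot\Pr[Y=t]$'' is off by a $\sqrt{q(1-q)}$-type factor (the correct local scale is $\epsilon\sqrt{k/(q(1-q))}$), and the invocation of Lemma~\ref{lem:(1+eps)(1-eps)} with $n=kq$, $m=k(1-q)$ does not meet that lemma's integrality hypothesis. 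The paper sidesteps all of this with a cleaner device: fix the index $t^\ast=\lfloor p(k-1)\rfloor$, define $f(x)=\Pr[B(k,x)\le t^\ast]$, and bound $f(p)-f(p+\epsilon)=\int_p^{p+\epsilon}|f'(x)|\,dx$. Because $f'(x)=-k\binom{k-1}{t^\ast}x^{t^\ast}(1-x)^{k-1-t^\ast}$ is a \emph{single} binomial term, Lemma~\ref{lem:binom} applies at $x_0=t^\ast/(k-1)$ and Lemma~\ref{lem:(1+eps)(1-eps)} with the genuine integers $n=t^\ast$, $m=k-1-t^\ast$ keeps $|f'|$ within a factor $7$ over the window, with no need for MLR or per-index ratio tracking. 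Your case split (bulk vs.\ skewed) matches the paper's $t^\ast>0$ vs.\ $t^\ast=0$, and your $t=0$ computation $(1-q)^k-(1-p)^k$ is exactly what they do. Also note the paper actually proves a slightly stronger statement (Lemma~\ref{lem:tv-lower-01}, for mixtures of binomials separated by a gap), which is needed elsewhere in the paper; for Lemma~\ref{lem:tv-lower} alone your pure-binomial version suffices. In short: same route and same ingredients, but the parameter-derivative trick is what makes the constants tractable, and your per-index plan as stated would need repair before the bookkeeping closes.
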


	To prove Lemma~\ref{lem:tv-lower}, we have the following weaker claim
	for the special case of Bernoulli distributions.

	\begin{lemma}\label{lem:tv-lower-01}
		Suppose that $k\in\mathbb{N}$, $\epsilon \in (0, 1/(15\sqrt{k}))$,
		$p, q \in [0, 1]$ and $q - p \ge \epsilon$.
        $\tilde{P}$ is a mixture of binomial distributions with $k$ trials and success probabilities in $[0, p]$, i.e.,
        $\tilde{P} = \sum_{i=1}^{\tot}\alpha_iB(k, p_i)$ for some nonnegative weights $\alpha_1, \ldots, \alpha_{\tot}$ and probabilities $p_1, \ldots, p_{\tot}$ in $[0, p]$, such that $\sum_{i=1}^{\tot}\alpha_i=1$.
        $\tilde{Q}$ is a mixture of binomial distributions with $k$ trials and success probabilities in $[q, 1]$.
		Then,
			\[\Delta(\tilde{P}, \tilde{Q}) \ge \frac{\epsilon\sqrt{k}}{15}.\]
	\end{lemma}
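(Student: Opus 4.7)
My plan is to find a single threshold $t^*$ such that $\Pr[B(k, q) \ge t^*] - \Pr[B(k, p) \ge t^*] \ge \epsilon\sqrt k/15$; this suffices to lower-bound $\Delta(\tilde{P}, \tilde{Q})$ via the stochastic-dominance reduction below. \textbf{Reduction.} For any integer $t$, the function $r \mapsto \Pr[B(k, r) \ge t]$ is monotone in $r$, so every mixture component $p_i \le p$ satisfies $\Pr[B(k, p_i) \ge t] \le \Pr[B(k, p) \ge t]$ and symmetrically for the $q_j$'s. Averaging over the mixture weights and taking $S = \{t, t+1, \ldots, k\}$ gives $\tilde{Q}(S) - \tilde{P}(S) \ge \Pr[B(k, q) \ge t] - \Pr[B(k, p) \ge t]$. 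I may take $q - p = \epsilon$ (only weakening the hypothesis), and by the symmetry $r \leftrightarrow 1-r$ also assume $q \le 1/2$.

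\textbf{Integral identity and peak bounds.}  A standard telescoping sum gives $\frac{d}{dr}\Pr[B(k, r) \ge t] = \phi(r, t) := k\binom{k-1}{t-1} r^{t-1}(1-r)^{k-t}$, so
\[
\Pr[B(k, q) \ge t] - \Pr[B(k, p) \ge t] = \int_p^q \phi(r, t)\, dr.
\]
For each $t$, $\phi(\cdot, t)$ peaks at $r^* = (t-1)/(k-1)$ with value $\phi(r^*, t) \ge k/(3\sqrt{t-1})$, by Lemma~\ref{lem:binom} applied with parameters $(k-1,\ t-1)$. Writing $n = t-1$, $m = k-t$, and $s = (r - r^*)(k-1)$, Lemma~\ref{lem:(1+eps)(1-eps)} (applicable because $m \ge n$ when $r^* \le 1/2$ and $m \ge 2$ for $k \ge 4$) gives $\phi(r, t)/\phi(r^*, t) = (1+s/n)^n (1-s/m)^m \ge 1/7$ for $s \in [0, 1.1\sqrt n]$. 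Combining, $\phi(r, t) \ge k/(21\sqrt n)$ on $r \in [r^*, r^* + 1.1\sqrt n/(k-1)]$.

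\textbf{Main argument.}  Assume first $p \ge 1/(k-1)$, and set $t^* = \lfloor p(k-1)\rfloor + 1$, so $n = t^* - 1 \ge 1$ and $r^* \le p$. If $q \le r^* + 1.1\sqrt n/(k-1)$, then $[p, q]$ lies inside the nice range and
\[
	\int_p^q \phi(r, t^*)\, dr \;\ge\; \epsilon \cdot \frac{k}{21\sqrt n} \;\ge\; \epsilon \cdot \frac{\sqrt{2k}}{21} \;\ge\; \frac{\epsilon\sqrt k}{15}
\]
(using $n \le (k-1)/2$ and $\sqrt 2/21 > 1/15$). Otherwise, the sub-interval $[p,\ r^* + 1.1\sqrt n/(k-1)]$ of length at least $(1.1\sqrt n - 1)/(k-1)$ still sits inside $[p, q]$ (because $r^* > p - 1/(k-1)$), on which $\phi \ge k/(21\sqrt n)$, giving
\[
	\int_p^q \phi\, dr \;\ge\; \frac{1.1\sqrt n - 1}{k-1}\cdot\frac{k}{21\sqrt n} \;=\; \frac{(1.1 - 1/\sqrt n)\,k}{21(k-1)} \;\ge\; \frac{1}{210},
\]
and $1/210 > 1/225 \ge \epsilon\sqrt k/15$ since $\epsilon < 1/(15\sqrt k)$. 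For $p < 1/(k-1)$ (the case $t^* = 1$, where Lemma~\ref{lem:(1+eps)(1-eps)} is vacuous), I instead bound $(1-p)^k - (1-q)^k$ directly: factoring as $(1-p)^k [1-(1-u)^k]$ with $u = \epsilon/(1-p)$, the elementary estimates $1 - (1-u)^k \ge \min(ku, 1)/2$ and $(1-p)^k \ge (1-1/(k-1))^k \ge 1/4$ (for $k \ge 2$) yield the desired bound.

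\textbf{Main obstacle.}  The crucial observation making the ``overshoot'' sub-case work is that the integrand bound $k/(21\sqrt n)$ and the nice-range width $1.1\sqrt n/(k-1)$ scale oppositely in $n$, so their product is a universal constant $\ge 1/210$ independent of $n$. The bookkeeping of the constants $1/3$ (Lemma~\ref{lem:binom}), $1/7$ (Lemma~\ref{lem:(1+eps)(1-eps)}), and $1.1$ leaves almost no slack before the target $1/15$: the inequalities $\sqrt 2/21 > 1/15$ and $1/210 > 1/225$ are both tight. Handling the degenerate $t^* = 1$ case, small $k$ (where $m \ge 2$ can fail), and the symmetric regime $p > 1/2$ (via $r \leftrightarrow 1-r$) all require separate verification.
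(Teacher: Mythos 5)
Your argument mirrors the paper's proof essentially step for step: monotonicity of the tail $r\mapsto\Pr[B(k,r)\ge t]$ reduces the mixtures to a single pair of binomials, the derivative $\phi(\cdot,t^*)$ is anchored at the mode $r^*=(t^*-1)/(k-1)$, and Lemma~\ref{lem:binom} together with Lemma~\ref{lem:(1+eps)(1-eps)} give the $k/(21\sqrt{n})$ lower bound on $\phi$ over a window of width $1.1\sqrt{n}/(k-1)$. Your case split on $t^*=1$ versus $t^*\ge 2$ is exactly the paper's split on $t=0$ versus $t>0$ (your $t^*$ is the paper's $t+1$, and your tail is one minus the paper's CDF), and the constants $1/3$, $1/7$, $1.1$ and the final $1/15$ are the same.

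There are two genuine problems, though. First, your branch for $t^*=1$ invokes $(1-1/(k-1))^k\ge 1/4$ ``for $k\ge 2$,'' which is false: at $k=2,3,4,5$ the left side equals $0$, $1/8$, $16/81\approx 0.198$, and $243/1024\approx 0.237$, all below $1/4$; the inequality only holds from $k=6$ on. You also flag at the end that small $k$ (where the Lemma~\ref{lem:(1+eps)(1-eps)} requirement $m\ge 2$ can fail) ``requires separate verification'' but never supply it, so as written the proof does not cover small $k$ at all. The paper closes this cheaply with a one-line base case before everything else: since $X/k\in[0,1]$ and its mean is at most $p$ under $\tilde P$ and at least $q$ under $\tilde Q$, we get $\Delta(\tilde P,\tilde Q)\ge q-p\ge\epsilon>\epsilon\sqrt{k}/15$ whenever $\sqrt{k}<15$, which in particular handles all $k<10$; some such base case is needed. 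Second, the WLOG ``$q\le 1/2$'' is not actually achievable by the reflection $r\mapsto 1-r$ (take $p=0.49$, $q=0.51$: reflection maps the pair to itself). The correct and sufficient reduction is $p\le 1/2$, which is always attainable and is all your argument actually uses (to get $n\le(k-1)/2$ and $m\ge n$); this is what the paper states.
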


	We prove Lemma~\ref{lem:tv-lower} by a reduction to Bernoulli distributions.
	\begin{proof}[Proof of Lemma~\ref{lem:tv-lower}]
		Suppose that $P$ and $Q$ share the support $[n]$.
		Define $\pi:[n]\to\{0, 1\}$ as
		\[
			\pi(x) = \begin{cases}
				1, & P(x) \ge Q(x),\\
				0, & P(x) < Q(x).
			\end{cases}
		\]
		Let $p$ and $q$ the means of $\pi(P)$ and $\pi(Q)$. Without loss of generality, $p \le q$.
		By construction, we have
			$|p - q| = \Delta(P, Q) \ge \epsilon$,
		and
		\begin{align*}
			\Delta(B(k, p), B(k, q))
		=	&\Delta((\pi(P_1), \pi(P_2), \ldots, \pi(P_k)), (\pi(Q_1), \pi(Q_2), \ldots, \pi(Q_k)))\\
		\le	&\Delta((P_1, P_2, \ldots, P_k), (Q_1, Q_2, \ldots, Q_k))\\
		=	&\Delta(\tp{P}{k}, \tp{Q}{k}),
		\end{align*}
		where $(P_i)_{i\in[k]}$ and $(Q_i)_{i\in[k]}$ are independent copies of $P$ and $Q$. Here the second step applies the data processing inequality.
		Applying Lemma~\ref{lem:tv-lower-01} with $\tilde{P} = B(k, p)$ and $\tilde{Q} = B(k, q)$ yields that
		\[
			\Delta(\tp{P}{k}, \tp{Q}{k})
		\ge	\Delta(B(k, p), B(k, q))
		\ge	\epsilon\sqrt{k} / 15.
		\]
	\end{proof}

	Now we turn to the more technical proof of Lemma~\ref{lem:tv-lower-01}.
	\begin{proof}[Proof of Lemma~\ref{lem:tv-lower-01}]
		If $k < 10$,
		the inequality trivially follows from
		\[
			\Delta(\tilde{P}, \tilde{Q})
		\ge	|p - q|
		> \epsilon\sqrt{k}/15.
		\]
		Thus we assume that $k \ge 10$ in the following proof.
		Without loss of generality, we have $p \le 1 / 2$ (otherwise we prove the lemma for $p' = 1 - q \le 1/2$ and $q' = 1 - p$).

		Let $t = \lfloor p(k - 1) \rfloor$.
		Note that $t \le (k - 1) / 2$ and $t \le p(k - 1) < t + 1$.
		Define function
			\[f(x) = \sum_{j = 0}^{t}\binom{k}{j}x^j(1-x)^{k-j}.\]
		Note that
		\begin{equation}\begin{split}\label{eq:f-prime}
			f'(x)
		=	&k\sum_{j=0}^{t}\left[\binom{k-1}{j-1}x^{j-1}(1-x)^{k-j}-\binom{k-1}{j}x^j(1-x)^{k-j-1}\right]\\
		=	&-k\binom{k-1}{t}x^t(1-x)^{k-1-t} \le 0,
		\end{split}\end{equation}
		and thus $f$ is non-increasing.

		Since $f(x)$ is the probability that the binomial distribution $B(k, x)$ assigns to set $\{0, 1, \ldots, t\}$, $\tilde{P}(\{0, 1, \ldots, t\})$ can be written as a weighted average of $f(p_i)$'s for $p_1, p_2, \ldots\in[0, p]$. Similarly, $\tilde{Q}(\{0, 1, \ldots, t\})$ is a weighted average of $f(q_i)$'s for $q_1, q_2, \ldots \in [q, 1]$. Since 
        \[\tilde{P}(\{0, 1, \ldots, t\}) - \tilde{Q}(\{0, 1, \ldots, t\})\]
        is a lower bound on $\Delta(\tilde{P}, \tilde{Q})$, it remains to show that $f(p_i)-f(q_i)\ge \epsilon\sqrt{k} / 15$ for any $p_i \le p$ and $q_i \ge q$. The monotonicity of $f$ and the fact that $q \ge p + \epsilon$ further imply that it suffices to prove
        \[f(p) - f(p + \epsilon) \ge \epsilon\sqrt{k} / 15.\]
        We prove the inequality in the following two cases.

		\paragraph{Case 1: $t = 0$.}
		In this case, we have $0 \le p < 1 / (t - 1)$.
		Note that
			\[f(p) - f(p + \epsilon) = -\epsilon f'(x) = \epsilon k(1 - x)^{k - 1}\]
		for some $x \in (p, p + \epsilon)$.
		If $\epsilon \le 1 / k$,
		we have $x \le p + \epsilon \le 2 / (k - 1)$,
		and
		\begin{equation}\label{eq:t=0}
			f(p) - f(p + \epsilon)
		\ge	\epsilon k\left(1 - \frac{2}{k - 1}\right)^{k - 1}
		\ge \epsilon k\left(1 - \frac{2}{9}\right)^{9}
		\ge \frac{\epsilon k}{10}.
		\end{equation}
		Here the second step follows from Lemma~\ref{lem:(1-a/x)^x}
		and the assumption that $k \ge 10$.
		It then follows that $f(p) - f(p + \epsilon) \ge \epsilon\sqrt{k} / 15$.

		If $\epsilon \ge 1 / k$, by Inequality~\eqref{eq:t=0}
		and the assumption that $\epsilon < 1 / (15\sqrt{k})$,
		\[
			f(p) - f(p + \epsilon)
		\ge	f(p) - f(p + 1 / k)
		\ge \frac{1}{10}
		\ge \frac{\epsilon\sqrt{k}}{15}.
		\]

		\paragraph{Case 2: $t > 0$.}
		Let $x_0 = t/(k-1)$. By Equation~\eqref{eq:f-prime} and Lemma~\ref{lem:binom}, we have
		\[
			|f'(x_0)|
		=	k\binom{k-1}{t}\left(\frac{t}{k-1}\right)^t\left(\frac{k-1-t}{k-1}\right)^{k-1-t}
		\ge \frac{k}{3\sqrt{t}}.
		\]

		For any $x\in[p, p + \epsilon]$, we can write
			\[x = (t + \alpha) / (k - 1),\]
		for some $\alpha \ge 0$.
		Then,
		\[
			\frac{|f'(x)|}{|f'(x_0)|}
		=	\left(1 + \frac{\alpha}{t}\right)^t\left(1 - \frac{\alpha}{k - 1 - t}\right)^{k - 1 - t}
		\]

		Since $t \le (k - 1) / 2$ and $k \ge 10$, 
		we have $k - 1 - t \ge \max(t, (k - 1) / 2) \ge \max(t, 2)$.
		Applying Lemma~\ref{lem:(1+eps)(1-eps)}
		with $n = t$ and $m = k - 1 - t$ yields that
		\[
			\left(1 + \frac{\alpha}{t}\right)^t\left(1 - \frac{\alpha}{k - 1 - t}\right)^{k - 1 - t}
		\ge	\frac{1}{7}
		\]
		for any $\alpha \in \left[0, 1.1\sqrt{t}\right]$.
		Consequently,
		\[
			|f'(x)| \ge \frac{|f'(x_0)|}{7} \ge \frac{k}{21\sqrt{t}}
		\]
		for any $x \in [x_0, x_0 + 1.1\sqrt{t} / (k - 1)]$.

		Let $l$ be the length of the intersection of
			$[p, p + \epsilon]$
		and
			$[x_0, x_0 + 1.1\sqrt{t} / (k - 1)]$.
		By our choice of $t$, we have
			$t \le p(k - 1) < t + 1$,
		and thus,
			$x_0 \le p < x_0 + 1 / (k - 1)$.
		This implies that
		\[
			l
		=	\min(p + \epsilon, x_0 + 1.1\sqrt{t} / (k - 1)) - p
		\ge \min(\epsilon, \sqrt{t} / [10(k - 1)]).
		\]

		Therefore, we conclude that
		\[
			f(p) - f(p + \epsilon)
		\ge	l \cdot \frac{k}{21\sqrt{t}}
		\ge	\frac{1}{21}\min\left(\frac{\epsilon k}{\sqrt{t}}, \frac{1}{10}\right)
		\ge	\frac{1}{21}\min\left(\epsilon\sqrt{2k}, \frac{1}{10}\right)
		\ge	\frac{\epsilon\sqrt{k}}{15}.
		\]
		Here the last step holds since
		our assumption $\epsilon < 1/(15\sqrt{k})$
		implies that $\epsilon\sqrt{2k} < 1/10$.
	\end{proof}

\section{Missing Proofs from Section~\ref{sec:const-k}}
	\subsection{Proof of Claim~\ref{claim:fnt}}
		\begin{proof}[Proof of Claim~\ref{claim:fnt}]
			By assumption, the frequency tensor $A$ can be written as
				$A = (1 - \epsilon)\hat{P} + \epsilon N$,
			where $\hat{P}$ and $N$ are probability $n^t$-tensors.
			In particular, $\hat{P}$ denotes the frequency
			among the $m$ ``good'' samples drawn from $\tp{p}{t}$,
			while $N$ denotes the frequency among the other batches.
			We prove the lemma by induction on $t$.

			\paragraph{Base case.}
			When $t = 1$, we have $\distset(n, t, A) = \{A\}$, and thus,
			\[
				\min_{q\in\distset(n, t, A)}\Delta(p, q)
			=	\Delta(p, A)
			\le	\Delta(p, \hat{P}) + \Delta(\hat{P}, A)
			=	\Delta(p, \hat{P}) + \epsilon\Delta(\hat{P}, N)
			\le \Delta(p, \hat{P}) + \epsilon.
			\]
			It is well-known that with probability $1 - \delta_0$,
				\[\Delta(p, \hat{P}) \le C\sqrt\frac{n + \ln\delta_0^{-1}}{m}\]
			for some absolute constant $C$,
			and thus,
			\[
				\min_{q\in\distset(n, t, A)}\Delta(p, q)
			\le C\sqrt\frac{n + \ln\delta_0^{-1}}{m} + \epsilon
			\le f_{n, 1}(\epsilon, m).
			\]

			\paragraph{Inductive step.}
			Let $\hat{p}$ and $\delta$ be the marginals of $\hat{P}$ and $N$, respectively.
			Then the marginal of $A$ is given by
				$a = (1 - \epsilon)\hat p + \epsilon \delta$.
			Moreover, the $i$-th slice of $A$ after normalization is given by
			\[
				A_i
			=	\frac{(1 - \epsilon)\hat{p}_i\cdot\hat{P}_i + \epsilon\delta_i\cdot N_i}{(1 - \epsilon)\hat{p}_i + \epsilon\delta_i}
			=	(1 - \epsilon'_i)\hat{P}_i + \epsilon'_i N_i,
			\]
			where
			\[
				\epsilon'_i 
			=	\frac{\epsilon\delta_i}{(1 - \epsilon)\hat{p}_i + \epsilon\delta_i}.
			\]
			Moreover, $A_i$ contains $m'_i = m\cdot\hat{p}_i$ samples drawn from $\tp{p}{(t-1)}$.

			Let
				$\alpha = (1 - \sqrt[t]{1 - \epsilon})/\epsilon$
			and
				$\beta = 1 / t$.
			We consider the following two cases.

			\paragraph{Case 1: For every $i \in [n]$, either $\delta_i / \hat{p}_i \ge 1 - \alpha$ or $\hat{p}_i \le \beta / n$.}
			Since $a \in \distset(n, t, A)$, we have
			\[
				\min_{q\in\distset(n, t, A)}\Delta(p, q)
			\le \Delta(p, a)
			\le \Delta(p, \hat{p}) + \Delta(\hat{p}, a)
			=	\Delta(p, \hat{p}) + \epsilon\Delta(\hat{p}, \delta).
			\]
			Moreover, since for any $i\in[n]$, either $\delta_i \ge (1 - \alpha)\hat{p}_i$ or $\hat{p}_i\le\beta / n$, it holds that
				$\hat{p}_i - \delta_i \le \alpha\hat{p}_i + \beta / n$,
			and thus
			\[
				\Delta(\hat{p}, \delta)
			=	\sum_{i = 1}^{n}\max(\hat{p}_i - \delta_i, 0)
			\le \sum_{i = 1}^{n}(\alpha\hat{p}_i + \beta / n)
			= \alpha + \beta.
			\]
			It follows that
			\[
				\min_{q\in\distset(n, t, A)}\Delta(p, q)
			\le \Delta(p, \hat{p}) + \epsilon(\alpha + \beta)
			= \Delta(p, \hat{p}) + 1 - \sqrt[t]{1 - \epsilon} + \epsilon / t
			\le \frac{3\epsilon}{t} + \Delta(p, \hat{p}).
			\]
			Here the last step $\sqrt[t]{1 - \epsilon} \ge 1 - 2\epsilon / t$ follows from Lemma~\ref{lem:1-eps-add}.

			\paragraph{Case 2: For some $i \in [n]$, both $\delta_i / \hat{p}_i < 1 - \alpha$ and $\hat{p}_i > \beta / n$ hold.}
			In this case, we have
			\[
				\epsilon'_i 
			=	\frac{\epsilon(\delta_i/\hat{p}_i)}{1 - \epsilon + \epsilon(\delta_i/\hat{p}_i)}
			\le \frac{\epsilon(1 - \alpha)}{1 - \epsilon\alpha}
			=	1 - (1 - \epsilon)^{(t - 1) / t}
			\]
			and
				\[m'_i = m\cdot\hat{p}_i \ge \beta m / n = m/(nt).\]
			We have the following bound:
			\[
				\min_{q\in\distset(n, t, A)}\Delta(p, q)
			\le \min_{q\in\distset(n, t-1, A_i)}\Delta(p, q).
			\]

			Combining the two cases shows that $\min_{q\in\distset(n, t, A)}\Delta(p, q)$
			is upper bounded by either
				$3\epsilon / t + \Delta(p, \hat{p})$
			or
				$\min_{q\in\distset(n, t-1, A_i)}\Delta(p, q)$
			for some $i \in [n]$ such that
				$\epsilon'_i \le 1 - (1 - \epsilon)^{(t - 1) / t}$
			and
				$m'_i \ge m/(nt)$.
			According to the induction hypothesis,
			for each $i \in [n]$,
			it holds with probability $1 - (n + 1)^{t - 2}\cdot\delta_0$ that
			\[
				\min_{q\in\distset(n, t-1, A_i)}\Delta(p, q)
			\le f_{n, t-1}\left(\epsilon'_i, m'_i\right).
			\]
			Moreover, with probability $1 - \delta_0$,
				\[\Delta(p, \hat{p}) \le C\sqrt\frac{n + \ln\delta_0^{-1}}{m}.\]
			By a union bound, with probability
				\[1 - n\cdot(n+1)^{t-2}\delta_0-\delta_0 \ge 1 - (n+1)^{t-1}\delta_0,\]
			it holds that
			\[
				\min_{q\in\distset(n, t, A)}\Delta(p, q)
			\le	\max\left(\frac{3\epsilon}{t} + C\sqrt{\frac{n + \ln\delta_0^{-1}}{m}},
					f_{n, t - 1}\left(1 - (1 - \epsilon)^{(t-1)/t}, \frac{m}{nt}\right)\right)
			=	f_{n, t}(\epsilon, m).
			\]
			This completes the inductive step.
		\end{proof}

	\subsection{Proof of Claim~\ref{claim:fnt-bound}}
		\begin{proof}[Proof of Claim~\ref{claim:fnt-bound}]
			Define $(\epsilon_k, m_k) = (\epsilon, m)$ and
			\[
				(\epsilon_{t-1}, m_{t-1}) = \left(1 - (1 - \epsilon_t)^{(t-1) / t}, m_t/(nt)\right).
			\]
			Then by Equations \eqref{eq:recur-1}~and~\eqref{eq:recur-t},
			\begin{align*}
				f_{n, k}(\epsilon_k, m_k)
			=	&\max\left(
				\frac{3\epsilon_k}{k} + C\sqrt{\frac{n + \ln\delta_0^{-1}}{m_k}},
				f_{n, k - 1}(\epsilon_{k-1}, m_{k-1})
			\right)\\
			=	&\max\left(
				\frac{3\epsilon_k}{k} + C\sqrt{\frac{n + \ln\delta_0^{-1}}{m_k}},
				\frac{3\epsilon_{k-1}}{k-1} + C\sqrt{\frac{n + \ln\delta_0^{-1}}{m_{k-1}}},
				f_{n, k-2}(\epsilon_{k-2}, m_{k-2})
			\right)\\
			=	&\cdots\\
			=	&\max_{t\in[k]}\left(
				\frac{3\epsilon_t}{t} + C\sqrt{\frac{n + \ln\delta_0^{-1}}{m_t}}
			\right).
			\end{align*}
			Moreover, by Lemma~\ref{lem:1-eps-add} and a simple induction,
				$\epsilon_t = 1 - (1 - \epsilon)^{t / k} \le 2t\epsilon/k$
			and
				$m_t = m/(n^{k - t}\arrange{k}{k - t})$,
			where $\arrange{n}{m}$ denotes $n(n - 1)\cdots(n - m + 1)$.
			It follows that for any $t \in [k]$,
			\[
				\frac{3\epsilon_t}{t} + C\sqrt{\frac{n + \ln\delta_0^{-1}}{m_t}}
			\le \frac{3}{t} \cdot \frac{2t\epsilon}{k} + C\sqrt\frac{n^{k-t}\arrange{k}{k-t}(n + \ln\delta_0^{-1})}{m}
			\le \frac{6\epsilon}{k} + C\sqrt\frac{n^k\cdot k!\cdot(n + \ln\delta_0^{-1})}{m}.
			\]
			\end{proof}

\end{document}